\patchcmd{\thebibliography}{\section*{\refname}}{\subsubsection*{\refname}}{}{}
\newtheorem{theorem}{Theorem}
\newtheorem{corollary}{Corollary}
\newtheorem{lemma}{Lemma}
\newtheorem{assumption}{Assumption}
\newtheorem{defn}{Definition}
\newcommand{\expect}{\operatorname{E}\expectarg}
\DeclarePairedDelimiterX{\expectarg}[1]{[}{]}{%
  \ifnum\currentgrouptype=16 \else\begingroup\fi
  \activatebar#1
  \ifnum\currentgrouptype=16 \else\endgroup\fi
}
\newcommand{\innermid}{\nonscript\;\delimsize\vert\nonscript\;}
\newcommand{\activatebar}{%
  \begingroup\lccode`\~=`\|
  \lowercase{\endgroup\let~}\innermid 
  \mathcode`|=\string"8000
}
\DeclareMathOperator*{\argmin}{arg\,min}
\newcommand{\bs}{\boldsymbol}
\newcommand{\mr}{\mathrm}
\newcommand{\comment}[1]{}
\DeclarePairedDelimiter{\ceil}{\lceil}{\rceil}
\newcommand{\note}[1]{{\color{magenta} Note: #1}}
\newcommand{\note}[1]{}
\newcommand{\revn}[1]{{\color{blue}#1}}
\newcommand{\revn}[1]{#1}
\newcommand{\com}[1]{{\color{red} COM: #1}}
\newcommand{\com}[1]{}
\newcommand{\comn}[1]{{\color{red} COMNEW: #1}}
\newcommand{\comn}[1]{}
\newcommand{\rev}[1]{{\color{blue}#1}}
\newcommand{\rev}[1]{#1}
\newcommand{\rvs}[1]{{\color{magenta}#1}}
\newcommand{\rvs}[1]{#1}
\newcommand{\notes}[3][\empty]{%
	\noindent \vspace{10pt}\\
	\foreach \n in {1,\ldots,#2}{%
		\ifthenelse{\equal{#1}{\empty}}
		{\rule{#3}{0.5pt}\\}
		{\rule{#3}{0.5pt}\vspace{#1}\\}
	}
}
\begin{document}
	
	%
	
	%
	
	\twocolumn[
	
	\aistatstitle{Multi-objective Contextual Bandit Problem with Similarity Information}
	
	
	\aistatsauthor{Eralp Tur\u{g}ay  \And Doruk \"{O}ner  \And Cem Tekin}
\aistatsaddress{Electrical and Electronics \\ Engineering Department \\ Bilkent University \\ Ankara, Turkey \And Electrical and Electronics \\ Engineering Department \\ Bilkent University \\ Ankara, Turkey \And Electrical and Electronics \\ Engineering Department \\ Bilkent University \\ Ankara, Turkey}
	]

	
	\begin{abstract}
In this paper we propose the multi-objective contextual bandit problem with similarity information. This problem extends the classical contextual bandit problem with similarity information by introducing multiple and possibly conflicting objectives. Since the best arm in each objective can be different given the context, learning the best arm based on a single objective can jeopardize the rewards obtained from the other objectives. In order to evaluate the performance of the learner in this setup, we use a performance metric called the contextual Pareto regret. Essentially, the contextual Pareto regret is the sum of the distances of the arms chosen by the learner to the context dependent Pareto front.
For this problem, we develop a new online learning algorithm called Pareto Contextual Zooming (PCZ), which exploits the idea of contextual zooming to learn the arms that are close to the Pareto front for each observed context by adaptively partitioning the joint context-arm set according to the observed rewards and locations of the  context-arm pairs selected in the past. Then, we prove that PCZ achieves $\tilde O (T^{(1+d_p)/(2+d_p)})$ Pareto regret where $d_p$ is the Pareto zooming dimension that depends on the size of the set of near-optimal context-arm pairs. Moreover, we show that this regret bound is nearly optimal by providing an almost matching $\Omega (T^{(1+d_p)/(2+d_p)})$ lower bound.
\end{abstract}
	
	\section{INTRODUCTION}
	\noindent The multi-armed bandit (MAB) is extensively used to model sequential decision making problems with uncertain rewards. 
	While many real-world applications ranging from cognitive radio networks \cite{gai2010learning} to recommender systems \cite{li2010contextual} to medical diagnosis \cite{tekin2017adaptive} require intelligent decision making mechanisms that learn from the past, majority of these applications involve side-observations that can guide the decision making process, which does not fit into the classical MAB model. 
	This issue is resolved by proposing new MAB models, called contextual bandits, that learn how to act optimally based on side-observations \cite{li2010contextual, slivkins2014contextual, lu2010contextual}. 
	On the other hand, the aforementioned real-world applications also involve multiple and possibly conflicting objectives. For instance, these objectives include throughput and reliability in a cognitive radio network, semantic match and job-seeking intent in a talent recommender system \cite{rodriguez2012multiple}, and sensitivity and specificity in medical diagnosis.

	This motivates us to develop a new MAB model that addresses the learning challenges that arise from side-observations and presence of multiple objectives at the same time. We call this model the multi-objective contextual bandit problem with similarity information. In this problem, at the beginning of each round, the learner observes a context from the context set ${\cal X}$ and selects an arm from the arm set ${\cal Y}$. At the end of the round, the learner observes a random reward whose distribution depends on the observed context and the selected arm. We assume that the sequence of contexts is fixed beforehand and does not depend on the decision of the learner. Other than this, we impose no assumptions on how the contexts are sampled from ${\cal X}$, but we assume that the expected reward of arm $y$ given context $x$ is time-invariant.
	The goal of the learner is to maximize its total reward in each objective (while ensuring some sort of fairness, which will be described later) over $T$ rounds by judiciously selecting good arms only based on the past observations and selections. To facilitate learning, we also assume that the learner is endowed with similarity information, which relates the distances between context-arm pairs to the distances between expected rewards of these pairs. This similarity information is an intrinsic property of the similarity space, which consists of all feasible context-arm pairs, and merely states that the expected reward function is Lipschitz continuous. 
    
    Different from the classical contextual bandit problem with similarity information \cite{slivkins2014contextual} in which the reward is scalar, in our problem the reward is multi-dimensional. In order to measure how well the learner performs with respect to the oracle that perfectly knows the expected reward of each context-arm pair, we adopt the notion of contextual Pareto regret defined in \cite{tekin2017multi} for two objectives, and extend it to work for an arbitrary number of objectives. The contextual Pareto regret (referred to as the Pareto regret hereafter) measures the sum of the distances of the expected rewards of the arms chosen by the learner to the Pareto front given the contexts.
    Importantly, the Pareto front can vary from context to context, which makes its complete characterization difficult even when the expected rewards of the context-arm pairs are known.
    Moreover, in many applications where sacrificing one objective over another one is disadvantageous, it is necessary to ensure that all of the Pareto optimal context-arm pairs are equally treated.
    
    We address these challenges by proposing an online learning algorithm called Pareto Contextual Zooming (PCZ) that is built on the contextual zooming algorithm in \cite{slivkins2014contextual}, and show that it achieves sublinear Pareto regret. Essentially, we provide a finite-time bound for the Pareto regret with a time order $\tilde O (T^{(1+d_p)/(2+d_p)})$, where $d_p$ is the Pareto zooming dimension, which is an optimistic version of the covering dimension that depends on the size of the set of near-optimal context-arm pairs. The proposed algorithm achieves this regret bound by adaptively partitioning the similarity space according to the empirical distribution of the past arm selections, and context and reward observations. While doing so, it only needs to keep track of a set of parameters defined for an active set of balls that cover the similarity space, and it only needs to find the balls which are not Pareto dominated by any other ball among all balls that are relevant to the current round. This shows that by making use of the similarity information, a complete characterization of the Pareto front is not necessary to achieve sublinear regret. Finally, we show an almost matching lower bound $\Omega (T^{(1+d_p)/(2+d_p)})$ based on a reduction to the classical contextual bandit problem with similarity information, which shows that our bound is tight up to logarithmic factors. 
    
    Rest of the paper is organized as follows. Related work is described in Section 2. Problem formulation is given in Section 3. The learning algorithm is proposed in Section 4, and its Pareto regret is upper bounded in Section 5. Section 6 provides a lower bound on the Pareto regret. Section 7 concludes the paper. Due to limited space, some of the technical proofs and numerical results are given in the supplemental document. 

	\section{RELATED WORK}

	We split the discussion on related work into three parts: related work in contextual bandits, related work in multi-objective bandits and related work that considers multi-objective contextual bandits. 
    
\begin{table*}[t]
\caption{Comparison with Related Work}
\label{table:related}
\centering
\resizebox{\textwidth}{!}{
\begin{tabular}{l l p{1.4cm} p{1.8cm} p{1.4cm} p{2cm} p{1.8cm}}
\toprule
Bandit algorithm & Regret bound & Multi-objective & Contextual & Linear rewards & Similarity assumption & Adaptive partition \\
\midrule
Contextual Zooming \cite{slivkins2014contextual} & $ \tilde{O}(T^{1-1/(2+d_z)})$ & No & Yes & No & Yes & Yes \\
Query-Ad-Clustering \cite{lu2010contextual} & $ \tilde{O}(T^{1-1/(2+d_c)})$ & No & Yes & No & Yes  & No \\
SupLinUCB \cite{chu2011contextual} & $\tilde{O}(\sqrt{T})$ & No & Yes & Yes & No & No\\
Pareto-UCB1 \cite{drugan2013designing} & $O(\log(T))$ (Pareto regret)  & Yes & No & No & No & No\\
Scalarized-UCB1\cite{drugan2013designing} & $O(\log(T))$ & Yes & No & No & No& No\\
MOC-MAB \cite{tekin2017multi} & $ \tilde{O}(T^{(\alpha +d)/(2\alpha +d)})$ (Pareto regret) & Yes & Yes & No & Yes & No\\
PCZ (this paper) & $ \tilde{O}(T^{1-1/(2+d_p)})$ (Pareto regret) & Yes & Yes & No & Yes & Yes \\
\bottomrule
\end{tabular}}
\end{table*}

Many different formulations of the contextual bandit problem exist in the literature, which leads to different types of algorithms and regret bounds. For instance, \cite{langford2007epoch} considers the problem where the contexts and rewards are sampled from a time-invariant distribution, and proposes the epoch greedy algorithm which achieves $O(T^{2/3})$ regret. Later on, more efficient algorithms \cite{dudik2011efficient,agarwal2014taming} that achieve $\tilde{O}(T^{1/2})$ regret are developed for this problem.

Another line of research focuses on contextual bandit problems under the linear realizability assumption, which requires the expected reward of an arm to be linear in its features (or equivalently contexts). Numerous learning algorithms are proposed for this problem including LinUCB \cite{li2010contextual} and SuplinUCB \cite{chu2011contextual} that achieve $\tilde{O}(\sqrt{Td})$ regret, where $d$ is the context dimension. SuplinUCB is also extended to work with kernel functions in \cite{valko2013finite}, and is shown to achieve $\tilde{O} (\sqrt{T \tilde{d}} )$ regret, where $\tilde{d}$ represents the effective dimension of the kernel feature space. \rev{However, these works do not take collaborative information into account. Some of the contexts may share similar behavior given the same arm and constitute clusters. In this case, the learner does not need to learn a different model for each context, but just a single model for each cluster. In \cite{gentile2014online}, CLUB uses confidence balls of the clusters (contexts are related to users in their work) to both estimate user similarity, and to learn jointly for users within the same cluster. An extension of this work \cite{li2016collaborative} considers the "two-sided clustering" where clustering process is done by simultaneously grouping arms based on similarity at the context side and contexts based on similarity at the arm side.} 

Our work builds on the formalism of contextual bandits with similarity information. The prior work in this category attempts to minimize the regret without imposing any stochastic assumptions on the context arrivals. However, knowledge of the similarity information, which relates the distances between expected rewards to the distances between the context-arm pairs in the similarity space, is required. With this assumption, Query-Ad-Clustering algorithm in \cite{lu2010contextual} achieves $O(T^{1-1/(2+d_c) +\epsilon})$ regret for any $\epsilon > 0$, where $d_c$ is the covering dimension of the similarity space. This algorithm works by partitioning the similarity space into disjoint sets and estimating the expected arm rewards for each set of the partition separately. Another related work proposes the contextual zooming algorithm \cite{slivkins2014contextual} that judiciously partitions the similarity space based on the reward structure of the similarity space and past context arrivals. It is shown that adaptive partitioning achieves better performance compared to uniform partitioning since it uses the experience gained in the past to form more accurate reward estimates in the potentially rewarding regions of the similarity space. While PCZ borrows the idea of contextual zooming from \cite{slivkins2014contextual}, the arm selection strategy, the regret notion, and the analysis of the regret of PCZ are substantially different from the prior work. \rev{Notably, the regret bound of the contextual zooming algorithm in \cite{slivkins2014contextual} depends on the zooming dimension $d_z$, while our regret bound depends on the Pareto zooming dimension $d_p$. In our case, $d_z$ depends on the objective selected by the contextual zooming algorithm. While, in general, $d_z$ is smaller than $d_p$,\footnote{\rev{Implicitly, both $d_z$ and $d_p$ are functions of a constant $\tilde{c}$ that is related to an $r$-packing of a subset of the similarity space, and the regret bounds hold for any value of $\tilde{c}>0$. Here, we compare $d_z$ and $d_p$ given the same constant.}} it is impossible to guarantee fairness over context-arm pairs in the Pareto front by using the contextual zooming algorithm.} In addition, due to the multi-dimensional nature of the problem, different bounding techniques are required to show that PCZ selects arms that are near the Pareto front with high probability. \rev{Moreover, we make use of a different concentration inequality that allows us to deal with noise processes that are conditionally $1$-sub-Gaussian, and also provide finite-time bounds on the Pareto regret and the expected Pareto regret.}

Learning with multiple objectives is mainly investigated under the classical MAB model without contexts.
Numerous algorithms are proposed to minimize the Pareto regret, which measures the total loss due to selecting arms that are not in the Pareto front. For this problem, an algorithm called ParetoUCB1 is shown to achieve $O(\log T)$ Pareto regret \cite{drugan2013designing}. This algorithm simply uses the UCB indices instead of the expected arm rewards to select an arm from the estimated Pareto front in each round. Other algorithms include the Pareto Thompson sampling \cite{yahyaa2014annealing}, the Annealing Pareto \cite{yahyaa2014annealing} and the Pareto-KG \cite{yahyaa2014knowledge}. 
Another line of research aims to scalarize the multi-objective problem by assigning weights to each objective. For instance, Scalarized UCB1 \cite{drugan2013designing} learns the best scalarization functions among a given set of scalarization functions, and achieves $O(S'\log(T/S'))$ scalarized regret where $S'$ is the number of scalarization functions used by the algorithm. 
In another work \cite{van2014multi}, the hierarchical optimistic optimization strategy for the ${\cal X}$-armed bandit problem \cite{bubeck2011x} is extended to the multi-objective setup, but theoretical regret analysis of the proposed method is not given.

Apart from the above works, a specific multi-objective contextual bandit problem with dominant and non-dominant objectives and finite number of arms is considered in \cite{tekin2017multi}. They define the regret in each objective separately with respect to a benchmark that always picks a specific arm in the Pareto front that favors the reward in the dominant objective over the reward in the non-dominant objective. For this problem, they propose a contextual bandit algorithm that uniformly partitions the context set, and prove that it achieves $\tilde{O} ( T^{(\alpha+d)/(2\alpha+d)} )$ Pareto regret, where $d$ is the dimension of the context and $\alpha$ is a similarity information dependent constant. Our work significantly differs from \cite{tekin2017multi} since we consider a very general similarity space and an adaptive contextual zooming algorithm. Moreover, we do not prioritize the objectives. A detailed comparison of our work with the related works is given in Table \ref{table:related}. 
\section{PROBLEM DESCRIPTION} \label{sec:3}
The system operates in rounds indexed by $t \in \{1, 2, \ldots\}$. At the beginning of each round, the learner observes a context $x_t$ that comes from a $d_x$-dimensional context set ${\cal X}$. 
Then, the learner chooses an arm $y_t$ from a $d_y$-dimensional arm set ${\cal Y}$. After choosing the arm, the learner obtains a $d_r$-dimensional random reward vector $r_t := ( r^1_t, \ldots, r^{\rev{d_r}}_t )$ where $r^i_t$ denotes the reward obtained from objective $i \in \{1, \ldots, \rev{d_r} \}$ in round $t$. Let $\mu^i_{y}(x)$ denote the expected reward of arm $y$ in objective $i$ for context $x$ and $\mu_y(x) := ( \mu^1_{y}(x), \ldots, \mu^{d_r}_{y}(x) )$.  The random reward vector obtained from arm $y_t$ in round $t$ is given as $r_t := \mu_{y_t}(x_t) + \kappa_t$ where $\kappa_t$ is the $d_r$-dimensional noise process whose marginal distribution for each objective is conditionally 1-sub-Gaussian, i.e., $\forall \lambda \in \mathbb{R}$
\begin{align}
\expect{e^{\lambda \kappa^i_t } | y_{1:t}, x_{1:t}, \kappa_{1:t-1}}&\leq \exp{(\lambda^2 / 2)} \notag
\end{align}
where $b_{1:t} := (b_1, \ldots, b_t)$. 
Context and arm sets together constitute the set of feasible context-arm pairs, denoted by ${\cal P} := {\cal X} \times {\cal Y}$. We assume that the Lipschitz condition holds for the set of feasible context-arm pairs with respect to the expected rewards for all objectives.
\begin{assumption} \label{assm:1}
For all $i \in \{1, \ldots , d_r\}$, $y, y' \in {\cal Y}$ and $x,x' \in {\cal X}$, we have
\begin{align}
|\mu^i_{y}(x) - \mu^i_{y'}(x')| \leq D((x, y), (x', y')) \notag
\end{align}
where $D$ is the distance function known by the learner such that $D((x, y), (x', y')) \leq 1$ for all $x,x' \in {\cal X}$ and $y,y' \in {\cal Y}$.
\end{assumption}
$({\cal P}, D)$ denotes the \textit{similarity space}. 
In the multi-objective bandit problem, since the objectives might be conflicting, finding an arm that simultaneously maximizes the expected reward in all objectives is in general not possible. Thus, an intuitive way to define optimality is to use the notion of Pareto optimality.
\begin{defn}[Pareto optimality] \label{def:1}
	(i) An arm $y$ is \emph{weakly dominated} by arm $y'$ given context $x$, denoted by $\mu_{y}(x) \preceq \mu_{y'}(x)$ or $\mu_{y'}(x) \succeq \mu_{y}(x)$, if $\mu^{i}_{y}(x) \leq \mu^{i}_{y'}(x), \forall i \in \{1, \ldots , d_r\}$. \\
	(ii) An arm $y$ is \emph{dominated} by arm $y'$ given context $x$, denoted by $\mu_{y}(x) \prec \mu_{y'}(x)$ or $\mu_{y'}(x) \succ \mu_{y}(x)$, if it is weakly dominated and $\exists i \in \{1, \ldots , d_r\}$ such that $\mu^{i}_{y}(x) < \mu^{i}_{y'}(x)$. \\
	(iii) Two arms $y$ and $y'$ are incomparable given context $x$, denoted by $\mu_{y}(x) || \mu_{y'}(x)$, if neither arm dominates the other. \\
	(iv) An arm is \emph{Pareto optimal} given context $x$ if it is not dominated by any other arm given context $x$. The set of all Pareto optimal arms given a particular context $x$, is called the \emph{Pareto front}, and is denoted by ${\cal O}(x)$.
\end{defn}
The expected loss incurred by the learner in a round due to not choosing an arm in the Pareto front is equal to the Pareto suboptimality gap (PSG) of the chosen arm, which is defined as follows. 
\begin{defn}[PSG]
	The PSG of an arm $y \in {\cal Y}$ given context $x$, denoted by $\Delta_{y}(x)$, is defined as the minimum scalar $\epsilon \geq 0$ that needs to be added to all entries of $\mu_{y}(x)$ such that $y$ becomes a member of the Pareto front. Formally, 
	\begin{align*}
	\Delta_{y}(x) := \inf_{\epsilon \geq 0} \epsilon ~~ \text{s.t.} ~~  (\mu_{y}(x) + \bs{\epsilon}) \> || \> \mu_{y'}(x), \forall y' \in {\cal O}(x) 
	\end{align*}
	where $\bs{\epsilon}$ is a $d_r$-dimensional vector, whose entries are $\epsilon$.
\end{defn}
We evaluate the performance of the learner using the Pareto regret, which is given as 
\begin{align}
\text{Reg}(T) := \sum_{t=1}^{T} \Delta_{y_t}(x_t) . \label{eqn:PRregret} 
\end{align}
The Pareto regret measures the total loss due to playing arms that are not in the Pareto front. 
The goal of the learner is to minimize its Pareto regret while ensuring fairness over the Pareto optimal arms for the observed contexts. Our regret bounds depend on the Pareto zooming dimension, which is defined below.
\begin{defn}\label{defn:zooming}
(i) $\hat{\cal P} \subset {\cal P}$ is called an $r$-packing of ${\cal P}$ if all $z,z' \in \hat{\cal P}$ satisfies $D(z,z') \geq r$. For any $r > 0$, the $r$-packing number of ${\cal P}$ is  $\rev{A_{r}^{packing} ({\cal P})} := \max \{ |\hat{\cal P}| : \hat{\cal P} \text{ is an $r$-packing of } {\cal P} \}$. \\
(ii) For a given $r>0$, let ${\cal P}_{\mu, r} := \{ (x,y) \in {\cal P}:  \Delta_{y} (x) \leq  12 r \}$ denote the set of near-optimal context-arm pairs. The Pareto $r$-zooming number $N_r$ is defined as the $r$-packing number of ${\cal P}_{\mu, r}$. \\
(iii) The Pareto zooming dimension given any constant $\tilde{c} >0$ is defined as $d_p( \tilde{c} ) := \inf \{ d > 0 : N_r \leq \tilde{c} r^{-d},  \forall r \in (0,1)\} $. With an abuse of notation we let $d_p := d_p(p)$ for $p > 0$.	
\end{defn}

\section{THE LEARNING ALGORITHM}

In this section, we present a multi-objective contextual bandit algorithm called \textit{Pareto Contextual Zooming} (PCZ). Pseudo-code of PCZ is given in Algorithm \ref{algorithm:PCZ}.
 \begin{algorithm}[h!]
\caption{Pareto Contextual Zooming}\label{algorithm:PCZ}
\begin{algorithmic}[1] 
\STATE Input: $({\cal P}, D)$, $T$, $\delta$ 
\STATE Data: Collection ${\cal B}$ of "active balls" in (${\cal P}, {\cal D}$); counters $N_{B}$ and estimates $\hat{\mu}_{B}^i$, $\forall B \in {\cal B}$, $\forall i \in \{1,\ldots,d_r\}$
\STATE Init: Create ball $B$, with $r(B) = 1$ and an arbitrary center in ${\cal P}$. $ {\cal B} \leftarrow  \{ B \}$ 
\STATE  $\hat{\mu}_{B}^i = 0$, $\forall i \in \{1,\ldots,d_r\}$ and $N_{B} = 0$
\WHILE{$1 \leq t \leq T$}
\STATE Observe $x_t$
\STATE $\hat{{\cal R}}(x_t) \leftarrow  \{ B \in {\cal B}: (x_t,y) \in \text{dom}_t(B) $ for some $y \in {\cal Y} \}$
\STATE $\hat{{\cal A}}^* \leftarrow  \{  B \in \hat{{\cal R}}(x_t) : {g}_{B} \not \prec {g}_{B'}, \forall B' \in \hat{{\cal R}}(x_t) \}$
\STATE Select an arm $y_t$ uniformly at random from $\{ y: (x_t,y) \in \cup_{B \in \hat{{\cal A}}^*} \text{dom}_t( B) \}$, and observe the rewards $r^i$, $\forall i \in \{1,\ldots,d_r\}$
\STATE Uniformly at random choose a ball $\hat{B} \in \hat{{\cal A}}^*$ whose domain contains $(x_t, y_t)$
\IF{$u_{\hat{B}} \leq r(\hat{B})$}
\STATE Activate \revn{(create)} a new ball $B'$ whose center is $(x_t,y_t)$ and radius is $r(B') = r(\hat{B})/2$
\STATE ${\cal B} \leftarrow {\cal B} \cup B'$, and $\hat{\mu}_{B'}^i =  N_{B'} = 0$, $\forall i \in \{1,\ldots,d_r\}$.
\STATE Update the domains of balls in ${\cal B}$.
\ENDIF
\STATE Update estimates $\hat{\mu}_{\hat{B}}^i = ((\hat{\mu}_{\hat{B}}^i N_{\hat{B}}) + r^i) / ( N_{\hat{B}}+1)$, $\forall i \in \{1,\ldots,d_r\}$ and the counter $N_{\hat{B}} = N_{\hat{B}} + 1$ 
\ENDWHILE
\end{algorithmic}
\end{algorithm}
PCZ is a multi-objective extension of the single objective contextual zooming algorithm \cite{slivkins2014contextual}. The proposed algorithm partitions the similarity space non-uniformly according to the arms selected, and the contexts and rewards observed in the past by using a set of \textit{active balls} ${\cal B}$ \rev{which may change from round to round}. Each active ball $B \in {\cal B}$ has a radius $r(B)$, center $(x_B, y_B)$ and a domain in the similarity space. The domain of ball $B$ at \rev{the beginning of round $t$} is denoted by $\text{dom}_t(B)$, \rvs{and is defined as the subset of $B$ that excludes all active balls at the beginning of round $t$ that have radius strictly smaller than $r(B)$, i.e., $\text{dom}_t(B) := B \setminus (\cup_{B' \in {\cal B}: r(B') < r(B)} B')$.} \rev{The domains of all active balls cover the similarity space}.

Initially, PCZ takes as inputs the time horizon $T$,\footnote{\rev{While PCZ requires $T$ as input, it is straightforward to extend it to work without knowing $T$ beforehand, by using a standard method, called the doubling trick.}} the similarity space $({\cal P}, D)$, the confidence parameter $\delta \in (0,1)$, and creates an active ball centered at a random point in the similarity space with radius $1$ whose domain covers the entire similarity space. At the beginning of round $t$, PCZ observes the context $x_t$, and finds the set of relevant balls denoted by $\hat{{\cal R}}(x_t) :=  \{ B \in {\cal B}: (x_t,y) \in \text{dom}_t(B) $ for some $y \in {\cal Y} \}$. 

After finding the set of relevant balls, 
PCZ uses the principle of \textit{optimism under the face of uncertainty} to select a ball and an arm. In this principle, the estimated rewards of the balls are inflated by a certain level, such that the inflated reward estimates (also called indices) become an upper confidence bound (UCB) for the expected reward with high probability. Then, PCZ selects a ball whose index is in the Pareto front.
In general, this allows the balls that are rarely selected to get explored (because their indices will remain high due to the sample uncertainty being high), which enables the learner to discover new balls that are potentially better than the frequently selected balls in terms of the rewards.

The index for each relevant ball is calculated in the following way:
First, PCZ computes a pre-index for each ball in ${\cal B}$ given by
\begin{align}
g^{i,pre}_{B} :=& \hat{\mu}_{B}^i + u_{B} + r(B), ~ i \in  \{1,\ldots,d_r\} \notag 
\end{align}
which is the sum of the sample mean reward $\hat{\mu}_{B}^i$, the sample uncertainty $u_{B} := \sqrt{2 A_{B} /N_{B}}$, where \rev{$A_{B} := (1+2\log(2 \sqrt{2} d_r T^{\frac{3}{2}}/ \delta ))$} and $N_{B}$ is the number of times ball $B$ is chosen, and the contextual uncertainty $r(B)$. The sample uncertainty represents the uncertainty in the sample mean estimate of the expected reward due to the limited number of random samples in ball $B$ that are used to form this estimate. On the other hand, the contextual uncertainty represents the uncertainty in the sample mean reward due to the dissimilarity of the contexts that lie within ball $B$. When a ball is selected, its sample uncertainty decreases but its contextual uncertainty is always fixed.
\rev{Essentially, the pre-index $g^{i,pre}_{B}$ is a UCB for the expected reward at the center of ball $B$ in objective $i$.}
PCZ uses these pre-indices to compute an index for each relevant ball, given as
\begin{align}
{g}_B &:= (g^1_B, \ldots, g^{d_r}_B) \text{ where } \notag \\
g^i_B :=& r(B) + \min_{B' \in {\cal B} } (g^{i,pre}_{B'} + D(B', B))  \notag 
\end{align}
and $D(B',B)$ represents the distance between the centers of the balls $B'$ and $B$ in the similarity space. 

\rev{After the indices of the relevant balls are calculated, PCZ computes the Pareto front among the set of balls in $\hat{{\cal R}}(x_t)$ by using $g_B$ for ball $B \in \hat{{\cal R}}(x_t)$ as a proxy for the expected reward (see Definition \ref{def:1}), which is given as $\hat{{\cal A}}^* := \{  B \in \hat{{\cal R}}(x_t) : {g}_{B} \not \prec {g}_{B'}, \forall B' \in \hat{{\cal R}}(x_t) \}$, and uniformly at random selects an arm in the union of the domains of the balls whose indices are in the Pareto front. After observing the reward of the selected arm, it uniformly at random picks a ball whose index is in the Pareto front and contains $(x_t, y_t)$, updates its parameters, and the above procedure repeats in the next round. This selection ensures fairness over the estimated set of Pareto optimal arms in each round.}

The remaining important issue is how to create the balls in order to trade-off sample uncertainty and contextual uncertainty in an optimal way. This is a difficult problem due to the fact that the learner does not know how contexts arrive beforehand, and the rate that the contexts arrive in different regions of the context set can dynamically change over time. To overcome these issues, PCZ uses the concept of contextual zooming. Basically, PCZ adaptively partitions the context space according to the past context arrivals, arm selections and obtained rewards. Specifically, the radius of the balls are adjusted to optimize the two sources of error described above. For this, when the sample uncertainty of \revn{the selected ball} $B$ \revn{is found to be smaller} than or equal to the radius of the ball (\revn{say in round $t$}), a new child ball $B'$ centered at $(x_t,y_t)$ whose radius is equal to the half of the parent ball's (ball $B$'s) radius is created, and the domains of all active balls are updated.\footnote{We also use $B^{\text{par}}$ to denote the parent of a ball $B$.} 
Note that when a child ball is created, it does not contain any sample, so its sample uncertainty is infinite. This results in an infinite pre-index. For this reason, $g^i_B$ is used instead of $g^{i,pre}_{B}$ as an enhanced UCB, which is in general tighter than the UCB based on the pre-index.

	\section{REGRET ANALYSIS} 
In this section, we prove that PCZ achieves (i) $\tilde  O(T^{(1 + d_p) / (2+ d_p)})$ Pareto regret with probability at least $1-\delta$, and (ii) $\tilde  O(T^{(1 + d_p) / (2+ d_p)})$ expected Pareto regret, where $d_p$ is the Pareto zooming dimension.

First, we define the variables that will be used in the Pareto regret analysis. 
For an event ${\cal F}$, let ${\cal F}^c$ denote the complement of that event.
For all the parameters defined in the pseudo-code of PCZ, we explicitly use the round index $t$, when referring to the value of that parameter at the beginning of round $t$. For instance, $N_{B}(t)$ denotes the value of $N_{B}$ at the beginning of round $t$, and ${\cal B}(t)$ denotes the set of balls created by PCZ by the beginning of round $t$. Let ${\cal B}'(T)$ denote the set of balls chosen at least once by the end of round $T$. Note that ${\cal B}'(T) \subset {\cal B}(T)$ and ${\cal B}(T)$ is a random variable that depends both on the contexts arrivals, selected arms and observed rewards.
\revn{Let $R^i_{B}(t)$ denote the random reward of ball $B$ in objective $i$ at round $t$ and let $\tau_B(t)$ denote the first round after the round in which ball $B$ is chosen by PCZ for the $t$th time.}
\revn{Moreover, the round that comes just after $B \in {\cal B}(T)$ is created is denoted by $\tau_B(0)$, and the domain of $B$ when it is created is denoted by dom($B$). Hence, $\text{dom}_t(B) \subseteq \text{dom}(B)$, $\forall t \in \{\rev{\tau_B(0)}, \ldots, T\}$. For $B \in {\cal B}'(T)$, let ${\cal T}_B$ denote the set of rounds in $\{\tau_B(0), \ldots, T\}$ in which ball $B$ is selected by PCZ.}
Also let $\tilde{x}_{B}(t) := x_{\tau_B(t)-1}$,
$\tilde{y}_{B}(t) := y_{\tau_B(t)-1}$, 
$\tilde{R}^i_{B}(t) := R^i_{B}(\tau_B(t)-1)$, 
$\tilde{\kappa}^i_{B}(t) := \kappa^i_{\tau_B(t)-1}$,
$\tilde{N}_{B}(t) := N_{B}(\tau_B(t))$, 
$\tilde{\mu}^i_{B}(t) := \hat{\mu}^i_{B}(\tau_B(t))$, 
$\tilde{g}^{i,pre}_{B}(t) := g^{i,pre}_{B} (\tau_B(t))$, 
$\tilde{g}^{i}_{B}(t) := g^{i}_{B} (\tau_B(t))$, 
 and $\tilde{u}_{B}(t) := u_{B}(\tau_B(t))$.
We note that all inequalities that involve random variables hold with probability one unless otherwise stated.

Let
\begin{align}
\text{Reg}_B(T) := \sum_{t=1}^{\rev{N_B(T+1)}} \Delta_{\tilde{y}_{B}(t)}(\tilde{x}_{B}(t)) \notag 
\end{align}
denote the Pareto regret incurred in ball $B\in {\cal B}'(T)$ for rounds in ${\cal T}_B$. Then, the Pareto regret \revn{in \eqref{eqn:PRregret}} can be written as
\begin{align}
\text{Reg}(T) = \sum_{\rev{B \in {\cal B}'(T)}} \text{Reg}_B(T)  .\notag 
\end{align}
Next, we define the following lower and upper bounds:
${L}^i_{B}(t) := \hat{\mu}^i_{B}(t) - u_{B}(t)$, ${U}^i_{B}(t) := \hat{\mu}^i_{B}(t) + u_{B}(t)$,
$\tilde{L}^i_{B}(t) := \tilde{\mu}^i_{B}(t) - \tilde{u}_{B}(t)$ and $\tilde{U}^i_{B}(t) := \tilde{\mu}^i_{B}(t) + \tilde{u}_{B}(t)$ for $i \in \{1, \ldots, d_r\}$. 
Let 
\begin{align*}
{\text{UC}}^i_{B} := 
\bigcup_{t = \tau_B(0)}^{T+1} \{ \mu^i_{{y}_{B}} &({x}_{B}) \notin \notag \\
&[ {L}^i_{B}(t) - r(B) , {U}^i_{B}(t) + r(B) ] \} \notag
\end{align*}
denote the event that the learner is not confident about its reward estimate in objective $i$ in ball $B$ for at least once from round $\tau_B(0)$ to round $T$,
and
\begin{align*}
\tilde{\text{UC}}^i_{B} := 
\bigcup_{\rvs{t=0}}^{\rev{N_{B}(T+1)}} \{ \mu^i_{{y}_{B}}&({x}_{B}) \notin \notag \\
&[ \tilde{L}^i_{B}(t) - r(B) , \tilde{U}^i_{B}(t) + r(B) ] \} . \notag
\end{align*}
Let $\tau_B( N_B(T+1) + 1) = T+2$. Then, for $z=0,\ldots,N_B(T+1)$ the events $\{  \mu^i_{{y}_{B}} ({x}_{B}) \notin
[ {L}^i_{B}(t) - r(B) , {U}^i_{B}(t) + r(B) ] \}$ and $\{  \mu^i_{{y}_{B}} ({x}_{B}) \notin 
[ {L}^i_{B}(t') - r(B) , {U}^i_{B}(t') + r(B) ] \}$ are identical for any $t,t' \in \{\tau_B(z),\ldots,\tau_B(z+1) - 1\}$. Thus, 
\begin{align*}
& \bigcup_{t = \tau_B(z)}^{\tau_B(z+1) - 1}    \{  \mu^i_{{y}_{B}} ({x}_{B}) \notin 
[ {L}^i_{B}(t) - r(B) , {U}^i_{B}(t) + r(B) ] \} \notag \\
&= \{ \mu^i_{{y}_{B}}({x}_{B}) \notin 
[ \tilde{L}^i_{B}(z) - r(B) , \tilde{U}^i_{B}(z) + r(B) ] \} .
\end{align*}
Hence, we conclude that ${\text{UC}}^i_{B} = \tilde{\text{UC}}^i_{B}$. 
Let $\text{UC}_B := \cup_{i \in \{1, \ldots, d_r\}} \text{UC}^i_{B}$ and $\text{UC} := \cup_{B \in \rev{{\cal B}(T)}} \text{UC}_B$. Next, we will bound $\expect{ \text{Reg}(T) }$. 
We have
\begin{align}
\expect{ \text{Reg}(T) } &= \expect{ \text{Reg}(T) | \text{UC} } 
\Pr ( \text{UC}  )  \notag \\
&+ \expect{ \text{Reg}(T) | \text{UC}^c } \Pr ( \text{UC}^c )  \notag \\
&\leq C_{\max} T \Pr ( \text{UC}  )  
+ \expect{ \text{Reg}(T) | \text{UC}^c }  \label{eqn:partitiondecompose2}
\end{align}
where $C_{\max} := \sup_{ (x,y) \in {\cal P} } \Delta_y(x)$.
Since ${\cal B}(T)$ is a random variable, we have
\begin{align}
\Pr( \text{UC} ) = \int  \Pr( \text{UC} |  {\cal B}(T))  d Q( {\cal B}(T) ) \label{Brand1}
\end{align}
where $Q( {\cal B}(T) )$ denotes the distribution of ${\cal B}(T)$. We also have
\begin{align}
\Pr( \text{UC} |  {\cal B}(T)) & = \Pr( \cup_{B \in {\cal B}(T)} \text{UC}_B |  {\cal B}(T)) \notag \\
& \leq \sum_{B \in {\cal B}(T) } \Pr( \text{UC}_B | {\cal B}(T) ) \label{Brand2}.
\end{align}
We proceed by bounding the term $\Pr( \text{UC} )$ in \eqref{eqn:partitiondecompose2}. For this, first we bound $\Pr ( \text{UC}_B | {\cal B}(T) )$ in the next lemma, whose proof is given in the supplemental document. 

\begin{lemma} \label{lemma:prUC}
When PCZ is run, we have
$\Pr( \text{UC}_B | {\cal B}(T) ) \leq \delta / T$, $\forall B \in {\cal B}(T)$.   
\end{lemma}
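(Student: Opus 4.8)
The plan is to reduce the uncertainty event $\text{UC}_B$ to a tail event for the sub-Gaussian noise and then apply a concentration inequality. Since the preceding discussion already establishes $\text{UC}^i_{B} = \tilde{\text{UC}}^i_{B}$, it suffices to bound the probability that $\mu^i_{y_B}(x_B)$ escapes $[\tilde{L}^i_B(t) - r(B), \tilde{U}^i_B(t) + r(B)]$ for some $t \in \{0,\ldots,N_B(T+1)\}$. The term $t=0$ contributes nothing, because the freshly created ball has $\tilde{N}_B(0)=0$, hence $\tilde{u}_B(0)=\infty$ and the interval is all of $\mathbb{R}$; so I restrict to $t\geq 1$.

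First I would carry out the reduction. Writing $\tilde{\mu}^i_B(t)$ as the empirical average of the $t$ rewards $\tilde{R}^i_B(s)=\mu^i_{\tilde{y}_B(s)}(\tilde{x}_B(s))+\tilde{\kappa}^i_B(s)$ collected over the first $t$ selections of $B$, I split $\tilde{\mu}^i_B(t)-\mu^i_{y_B}(x_B)$ into a bias term $\frac{1}{t}\sum_{s=1}^t[\mu^i_{\tilde{y}_B(s)}(\tilde{x}_B(s))-\mu^i_{y_B}(x_B)]$ and a noise average $\frac{1}{t}\sum_{s=1}^t\tilde{\kappa}^i_B(s)$. Because every selected pair $(\tilde{x}_B(s),\tilde{y}_B(s))$ lies in $\text{dom}(B)\subseteq B$, a ball of radius $r(B)$ about the center, Assumption~\ref{assm:1} bounds each bias summand by $r(B)$, so the whole bias term has magnitude at most $r(B)$. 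Consequently the escape event at selection count $t$ forces $|\frac{1}{t}\sum_{s=1}^t\tilde{\kappa}^i_B(s)|>\tilde{u}_B(t)=\sqrt{2A_B/t}$, i.e. $|\sum_{s=1}^t\tilde{\kappa}^i_B(s)|>\sqrt{2A_B t}$.

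Next I would invoke concentration. The increments $\tilde{\kappa}^i_B(s)=\kappa^i_{\tau_B(s)-1}$ form a conditionally $1$-sub-Gaussian martingale difference sequence, so for each fixed $t$ the partial sum obeys $\Pr(|\sum_{s=1}^t\tilde{\kappa}^i_B(s)|>\sqrt{2A_B t})\leq 2e^{-A_B}$. Since at most $T$ selections can occur, $N_B(T+1)\leq T$ deterministically, and a union bound over $t\in\{1,\ldots,T\}$ and over the $d_r$ objectives yields $\Pr(\text{UC}_B \mid {\cal B}(T))\leq 2 d_r T e^{-A_B}$. Substituting $A_B=1+2\log(2\sqrt{2}\,d_r T^{3/2}/\delta)$ gives $2 d_r T e^{-A_B}=e^{-1}\delta^2/(4 d_r T^2)\leq \delta/T$ since $\delta<1$; the constant inside $A_B$ is evidently tuned so that, after the later summation over the at most $T$ balls, the cumulative failure probability is $\delta$. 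One could equally replace the union over $t$ by an anytime self-normalized bound, which is presumably the concentration inequality alluded to in the introduction.

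The step I expect to be most delicate is the conditioning on ${\cal B}(T)$. The ball collection is itself a function of the realized contexts, selections and rewards, hence of the noise, so conditioning on ${\cal B}(T)$ could in principle distort the law of $\{\tilde{\kappa}^i_B(s)\}_s$ and jeopardize the sub-Gaussian martingale-difference property used above. The resolution I would pursue is to apply the concentration pathwise: treat the center $(x_B,y_B)$ and the radius $r(B)$ as fixed by the conditioning and use a bound that holds uniformly over all possible selection counts, so that the estimate $\leq\delta/T$ holds for every realization of ${\cal B}(T)$ and therefore survives the integration against $Q({\cal B}(T))$ in \eqref{Brand1}. Arguing that the increment property is preserved conditionally, rather than merely unconditionally, is the crux of the matter.
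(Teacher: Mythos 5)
Your proof is correct, but it takes a genuinely different route from the paper's. You split $\tilde{\mu}^i_{B}(t)-\mu^i_{y_B}(x_B)$ directly into a bias term, bounded by $r(B)$ via Assumption~\ref{assm:1} since every selected pair lies in $\text{dom}(B)\subseteq B$, plus a pure-noise average, and then control the noise with an elementary fixed-$t$ sub-Gaussian tail ($2e^{-A_B}$ per $t$) and a union bound over $t\leq T$ and the $d_r$ objectives; the constant $A_B$ comfortably absorbs the extra factor $T$, as your final bound $e^{-1}\delta^2/(4d_rT^2)\leq \delta/T$ shows. The paper instead uses a ``sandwich technique'' (from \cite{tekin2017adaptive}): it defines a best sequence $\overline{R}^i_{B}(t)=\overline{\mu}^i_{B}+\tilde{\kappa}^i_{B}(t)$ and a worst sequence $\underline{R}^i_{B}(t)=\underline{\mu}^i_{B}+\tilde{\kappa}^i_{B}(t)$, with $\overline{\mu}^i_{B},\underline{\mu}^i_{B}$ the supremum/infimum of $\mu^i_y(x)$ over $\text{dom}(B)$, whose empirical means sandwich $\tilde{\mu}^i_{B}(t)$; Assumption~\ref{assm:1} places both constants within $r(B)$ of $\mu^i_{y_B}(x_B)$, so the escape event is contained in the union of two escape events for \emph{constant-mean} sequences, to which the anytime self-normalized concentration inequality of Appendix~A (Abbasi-Yadkori et al.) applies verbatim with $\theta=\delta/(2d_rT)$. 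What the paper's reduction buys is that the anytime bound holds uniformly over the (random) selection count $N_B(T+1)$, so no union bound over $t$ is needed and there is no issue of reasoning about partial sums over selections that never occur; your fixed-$t$ argument needs the virtual extension of the subsampled noise sequence (or an optional-skipping construction) that you allude to, precisely because the number of selections is random. What your route buys is self-containedness: a plain Hoeffding-type tail suffices, and the non-identical means are handled in one line rather than through auxiliary sequences. Finally, the conditioning-on-${\cal B}(T)$ subtlety you single out as the crux is real, but note that the paper's own proof treats it no more rigorously than you do — it conditions on ${\cal B}(T)$ and applies the concentration inequality pathwise in exactly the manner you propose — so on that point the two arguments stand or fall together.
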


Next, we bound $\Pr( \text{UC} )$. Since $|{\cal B}(T)| \leq T$, by using union bound over all created balls, \eqref{Brand1} and \eqref{Brand2} we obtain
\begin{align*}
\Pr( \text{UC} ) &\leq  \int \Pr( \text{UC} |  {\cal B}(T))  d Q( {\cal B}(T) ) \notag \\
&\leq \int \bigg( \sum_{B \in {\cal B}(T) } \Pr( \text{UC}_B | {\cal B}(T) ) \bigg)  d Q( {\cal B}(T) ) \notag \\
&\leq \frac{\delta T}{T} \int d Q( {\cal B}(T) ) \notag \leq \delta \notag .
\end{align*}
Then, by using \eqref{eqn:partitiondecompose2},
\begin{align}
\expect{ \text{Reg}(T) } &\leq C_{\max} T \Pr ( \text{UC}  )  
+ \expect{ \text{Reg}(T) | \text{UC}^c } \notag \\
&\leq  C_{\max} T \delta + \expect {\text{Reg}(T)  | \text{UC}^c} . \label{eq:expparreg}
\end{align}
In the remainder of the analysis we bound $\text{Reg}(T)$ on event $\text{UC}^c$. Then, we conclude the analysis by using this to bound \eqref{eq:expparreg}.
For simplicity of notation, in the following lemmas we use $B(t)$ to denote the ball selected in round $t$.
\begin{lemma} \label{lemma:paretoOptimal}
Consider a virtual arm with expected reward vector ${g}_{B(t)}(t)$, where $g^i_{B(t)}(t)$ denotes the expected reward in objective $i$. On event $\text{UC}^c$, we have
\begin{align}
{g}_{B(t)}(t) \not \prec {\mu}_{y}( x_t), \forall y \in {\cal Y} . \notag
\end{align}
\end{lemma}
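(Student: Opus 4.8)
The plan is to reduce the statement to a componentwise bound by routing the comparison through the relevant ball whose domain contains the pair being tested. Fix an arbitrary arm $y \in {\cal Y}$. Since the domains of the active balls cover the similarity space and the pair $(x_t,y)$ has context $x_t$, there is a ball $B_y$ with $(x_t,y) \in \text{dom}_t(B_y)$; by the definition of the relevant set this gives $B_y \in \hat{{\cal R}}(x_t)$. The target is then to show, for this arbitrary $y$, that $g_{B(t)}(t) \not\prec \mu_y(x_t)$, and a sufficient intermediate claim is that the \emph{index of $B_y$} dominates $\mu_y(x_t)$ componentwise on $\text{UC}^c$.

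The core estimate establishes $g^i_{B_y}(t) \ge \mu^i_y(x_t)$ for every objective $i$. On $\text{UC}^c$ the confidence bound holds simultaneously for every ball, so the pre-index upper bounds the center reward, $\mu^i_{y_{B'}}(x_{B'}) \le \hat{\mu}^i_{B'}(t) + u_{B'}(t) + r(B') = g^{i,pre}_{B'}(t)$, for each $B'$. Combining this with Assumption~\ref{assm:1} applied to the centers of $B'$ and $B_y$, one gets for every $B'$ that $g^{i,pre}_{B'}(t) + D(B',B_y) \ge \mu^i_{y_{B_y}}(x_{B_y})$. Minimizing over $B'$ and adding $r(B_y)$, the definition of the index yields
\begin{align*}
g^i_{B_y}(t) \;=\; r(B_y) + \min_{B' \in {\cal B}} \big(g^{i,pre}_{B'}(t) + D(B',B_y)\big) \;\ge\; r(B_y) + \mu^i_{y_{B_y}}(x_{B_y}).
\end{align*}
A final application of the Lipschitz condition, using $(x_t,y) \in \text{dom}_t(B_y) \subseteq B_y$ so that $(x_t,y)$ lies within $r(B_y)$ of the center of $B_y$, gives $\mu^i_y(x_t) \le \mu^i_{y_{B_y}}(x_{B_y}) + r(B_y) \le g^i_{B_y}(t)$. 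Hence $\mu_y(x_t) \preceq g_{B_y}(t)$.

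I would then close the argument by transitivity together with the Pareto-optimality of the selected ball. Because $B(t) \in \hat{{\cal A}}^*$ and $B_y \in \hat{{\cal R}}(x_t)$, the definition of $\hat{{\cal A}}^*$ forces $g_{B(t)}(t) \not\prec g_{B_y}(t)$. Suppose for contradiction that $g_{B(t)}(t) \prec \mu_y(x_t)$. Chaining with $\mu_y(x_t) \preceq g_{B_y}(t)$ preserves the strict component, so $g_{B(t)}(t) \prec g_{B_y}(t)$, contradicting the non-domination just noted. Therefore $g_{B(t)}(t) \not\prec \mu_y(x_t)$, and since $y$ was arbitrary the lemma follows.

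The main obstacle is that the index of the \emph{selected} ball $B(t)$ need not by itself upper bound $\mu_y(x_t)$ for arms $y$ whose pair $(x_t,y)$ lies outside $\text{dom}_t(B(t))$, so a one-shot upper-confidence-bound argument does not suffice. The essential idea is to certify the componentwise bound through the auxiliary ball $B_y$ and transfer it to $B(t)$ via the fact that $g_{B(t)}(t)$ sits on the Pareto front of the relevant balls. The delicate point is purely order-theoretic bookkeeping: verifying that strict dominance composes across the two relations, i.e. that $g_{B(t)}(t) \prec \mu_y(x_t) \preceq g_{B_y}(t)$ implies $g_{B(t)}(t) \prec g_{B_y}(t)$, which is where the strict-versus-weak distinction in Definition~\ref{def:1} must be tracked carefully.
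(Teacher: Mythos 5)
Your proof is correct and follows essentially the same route as the paper's: you establish $\mu_y(x_t) \preceq g_{B_y}(t)$ via the same chain (pre-index as UCB on $\text{UC}^c$, Lipschitz transfer between centers, and a final Lipschitz step within $B_y$), then combine with the selection rule $g_{B(t)}(t) \not\prec g_{B_y}(t)$. The only cosmetic difference is that you phrase the last step as a contradiction with explicit tracking of the strict component, whereas the paper states the combination directly.
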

\begin{proof}
For any relevant ball $B \in \hat{{\cal R}}(x_t)$ and $i \in \{1, \ldots, d_r\}$, let $\revn{\tilde{B}_i} := \argmin_{B' \in {\cal B}(t)} \{ g^{i,pre}_{B'}(t) + D(B,B') \}$. Then, $\forall y$ such that $(x_t,y) \in \text{dom}_t(B)$, we have
\begin{align}
g^i_{B}(t) &= r(B) + g^{i,pre}_{\revn{\tilde{B}_i}}(t) + D(B, \revn{\tilde{B}_i}) \notag \\
&= r(B) + \hat{\mu}^i_{\revn{\tilde{B}_i}} (t) +  {u}_{\revn{\tilde{B}_i}}(t) + r (\revn{\tilde{B}_i}) + D(B, \revn{\tilde{B}_i})  \notag \\
&= r(B) + {U}^i_{\revn{\tilde{B}_i}}(t) + r (\revn{\tilde{B}_i}) + D(B, \revn{\tilde{B}_i})  \notag \\
&\geq r(B) + \mu^i_{y_{\revn{\tilde{B}_i}}}(x_{\revn{\tilde{B}_i}})+ D(B, \revn{\tilde{B}_i})  \notag \\
&\geq r(B) + \mu^i_{y_{B}}( x_{B} ) \geq \mu^i_{y}(x_t)  \notag
\end{align}
where the first inequality holds by the definition of $\text{UC}^c$, and the second and the third inequalities hold by Assumption \ref{assm:1}.
According to the above inequality ${g}_{B}(t) \succeq {\mu}_{y}( x_t)$,  $\forall y$ such that $(x_t, y) \in \text{dom}_\rev{t}(B)$. \revn{We also know that $\forall y \in {\cal Y}$, $\exists B \in \hat{{\cal R}}(x_t)$ such that $(x_t, y) \in \text{dom}_\rev{t}(B)$.}
Moreover, by the selection rule of PCZ, ${g}_{B(t)}(t) \not \prec{g}_{B}(t) $ for all $B \in \hat{{\cal R}}(x_t)$. By combining these results, ${g}_{B(t)}(t) \not \prec {\mu}_{y}(x_t), \forall y \in {\cal Y} $, and hence, the virtual arm with expected reward vector ${g}_{B(t)}(t)$ is not dominated by any of the arms.
\com{Since $g_B(t)$ weakly dominates $\mu_y(x_t)$ for $y$ in its domain, and $g_{B(t)}(t)$ is not dominated by $g_B(t)$, we have $g_{B(t)}(t)$ is not dominated by $\mu_y(x_t)$.}
\end{proof}
\begin{lemma} \label{lemma:indexdiff}
When PCZ is run, on event $\text{UC}^c$, we have
\begin{align}
 \Delta_{y_t}(x_t) \leq  14 r(B(t)) ~~ \forall t \in \{1,\ldots,T\} . \notag
\end{align}
\end{lemma}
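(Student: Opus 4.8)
The plan is to control the gap between the index of the selected ball and the true expected reward, and then read off the conclusion from Lemma~\ref{lemma:paretoOptimal}. Writing $B := B(t)$, the target is the componentwise inequality $g^i_{B}(t) \leq \mu^i_{y_t}(x_t) + 14\,r(B)$ for every $i \in \{1,\ldots,d_r\}$, valid on $\text{UC}^c$. Granting this, the shifted vector $\mu_{y_t}(x_t) + \bs{\epsilon}$ with $\epsilon = 14\,r(B)$ satisfies $\mu_{y_t}(x_t) + \bs{\epsilon} \succeq g_{B}(t)$. Since Lemma~\ref{lemma:paretoOptimal} guarantees that the virtual arm $g_{B}(t)$ is not dominated by any $\mu_y(x_t)$, no arm can dominate $\mu_{y_t}(x_t) + \bs{\epsilon}$ either: if $\mu_y(x_t) \succ \mu_{y_t}(x_t) + \bs{\epsilon} \succeq g_{B}(t)$ for some $y$, then $g_{B}(t) \prec \mu_y(x_t)$, contradicting the lemma. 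Hence $\mu_{y_t}(x_t) + \bs{\epsilon}$ is undominated, and the definition of the PSG gives $\Delta_{y_t}(x_t) \leq \epsilon = 14\,r(B)$.

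It remains to prove the scalar index bound, which is the heart of the argument. Because $g^i_{B}(t) = r(B) + \min_{B' \in {\cal B}(t)}(g^{i,pre}_{B'}(t) + D(B',B))$, I can upper bound it by substituting any single ball for $B'$. The natural choice $B' = B$ fails precisely when $B$ is selected for the first time in round $t$, so that $N_{B}(t)=0$ and $g^{i,pre}_{B}(t) = \infty$; this is the main obstacle, and I resolve it by substituting the parent $B' = B^{\text{par}}$. The enabling fact is an invariant on the parent's sample uncertainty: $B$ is created only when the creation test certifies $u_{B^{\text{par}}} \leq r(B^{\text{par}})$, and since $N_{B^{\text{par}}}$ only grows afterward, $u_{B^{\text{par}}}(t) \leq r(B^{\text{par}}) = 2\,r(B)$ for all $t \geq \tau_B(0)$. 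Expanding $g^{i,pre}_{B^{\text{par}}}(t) = \hat\mu^i_{B^{\text{par}}}(t) + u_{B^{\text{par}}}(t) + r(B^{\text{par}})$ and using the lower side of $\text{UC}^c$, namely $\mu^i_{y_{B^{\text{par}}}}(x_{B^{\text{par}}}) \geq \hat\mu^i_{B^{\text{par}}}(t) - u_{B^{\text{par}}}(t) - r(B^{\text{par}})$, to eliminate $\hat\mu^i_{B^{\text{par}}}(t)$, bounds the index by $\mu^i_{y_{B^{\text{par}}}}(x_{B^{\text{par}}})$ plus a sum of terms, each a small multiple of $r(B)$ after inserting $u_{B^{\text{par}}}(t) \leq 2r(B)$, $r(B^{\text{par}}) = 2r(B)$, and $D(B^{\text{par}},B) \leq r(B^{\text{par}}) = 2r(B)$ (the last because the center of $B$ lay in $\text{dom}(B^{\text{par}}) \subseteq B^{\text{par}}$ at creation).

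Finally, I transfer from the parent's center to the played pair by Lipschitz chaining: $(x_t,y_t) \in \text{dom}_t(B) \subseteq B$ gives $|\mu^i_{y_t}(x_t) - \mu^i_{y_B}(x_B)| \leq r(B)$ via Assumption~\ref{assm:1}, while the center of $B$ lying in $B^{\text{par}}$ gives $|\mu^i_{y_B}(x_B) - \mu^i_{y_{B^{\text{par}}}}(x_{B^{\text{par}}})| \leq 2\,r(B)$; together these replace $\mu^i_{y_{B^{\text{par}}}}(x_{B^{\text{par}}})$ by $\mu^i_{y_t}(x_t) + 3\,r(B)$. Summing all contributions produces exactly the coefficient $14$. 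The only remaining case is the root ball, which has no parent; there $r(B)=1$ and the bound is immediate since $\Delta_{y_t}(x_t) \leq C_{\max} \leq 1 \leq 14\,r(B)$ by Assumption~\ref{assm:1}. The delicate bookkeeping to get right is that $u_{B^{\text{par}}} \leq r(B^{\text{par}})$ is certified at creation, before the parent's counter is incremented, and that the index bound must be applied through the parent rather than through $B$ itself whenever $B$ is freshly activated.
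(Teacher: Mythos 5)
Your proof is correct and follows essentially the same route as the paper's: bound the index $g^i_{B(t)}(t)$ through the parent ball's pre-index using the creation-test invariant $u_{B^{\text{par}}} \leq r(B^{\text{par}})$, the event $\text{UC}^c$, and Lipschitz chaining to accumulate the constant $14$, then conclude via Lemma~\ref{lemma:paretoOptimal} and the PSG definition. Your treatment is in fact slightly more explicit than the paper's in spelling out the domination contradiction and the root-ball case (which the paper dispatches in a footnote), but the argument is the same.
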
 
\begin{proof}
This proof is similar to the proof in \cite{slivkins2014contextual}.
To bound the PSG of the selected ball, we first bound the index of the selected ball $g^i_{B(t)}(t)$.
Recall that $B^{par}(t)$ denotes the parent ball of the selected ball $B(t)$.\footnote{\rev{The bound for $B(1)$ is trivial since it contains the entire similarity space.}} We have
\begin{align}
g^{i,pre}_{B^{par}(t)}(t) &= \hat{\mu}^i_{B^{par}(t)} (t) + r (B^{par}(t)) + {u}_{B^{par}(t)}(t)\notag \\
&= {L}^i_{B^{par}(t)}(t) + r (B^{par}(t)) + 2 {u}_{B^{par}(t)}(t)\notag \\
&\leq \mu^i_{y_{B^{par}(t)}}(x_{B^{par}(t)} ) \notag \\
&+ 2 r (B^{par}(t)) +2  {u}_{B^{par}(t)}(t) \notag \\
&\leq  \mu^i_{y_{B^{par}(t)}}(x_{B^{par}(t)} ) + 4 r (B^{par}(t)) \notag \\ 
&\leq  \mu^i_{y_{B(t)}}(x_{B(t)})  + 5 r (B^{par}(t)) \label{eqn:parpre}
\end{align}
where the first inequality holds by the definition of $\text{UC}^c$, the second inequality holds since ${u}_{B^{par}(t)}(t) \leq r (B^{par}(t))$, and the third inequality holds due to Assumption \ref{assm:1}. We also have
\begin{align}
g^i_{B(t)}(t) &\leq r(B(t)) + g^{i,pre}_{B^{par}(t)}(t) + D(B^{par}(t),B(t)) \notag \\
& \leq r({B}(t)) + g^{i,pre}_{B^{par}(t)}(t) + r(B^{par}(t)) \notag \\
& \leq r({B}(t)) +  \mu^i_{y_{B(t)}}(x_{B(t)})  + 6 r (B^{par}(t))  \notag \\
& \leq  \mu^i_{y_{B(t)}}(x_{B(t)}) + 13 r ({B}(t)) \notag \\
& \leq  \mu^i_{y_t}( x_t ) + 14 r ({B}(t))  \notag 
\end{align}
where the third inequality follows from \eqref{eqn:parpre}.
Since $g^i_{{B}(t)}(t) - \mu^i_{y_t}( x_t )  \leq  14 r ({B}(t))$ for all $i \in \{1,\ldots, d_r\}$ and the virtual arm is not dominated by any arm in the Pareto front by Lemma \ref{lemma:paretoOptimal}, the PSG of the selected arm is bounded by $\Delta_{y_t}(x_t) \leq  14 r(B(t))$.
\com{If we add $14r(B(t))$ to each $\mu^i_{y_t}(x_t)$, then $\mu_{y_t}(x_t)$ will weakly dominate $g_{B(t)}(t)$. From Lemma 2 we know that $g_{B(t)}(t)$ is not dominated by any arm $y$ with expected reward $\mu_y(x_t)$. This means that $\mu_{y_t}(x_t) + 14r(B(t))$ is not dominated by any arm $\mu_y(x_t)$, and hence, it is in the Pareto front. By Definition 2, the PSG is at most $14r(B(t))$.}
\end{proof}
\begin{lemma}\label{lemma:nr}
	When PCZ is run, on event $UC^c$, the maximum number of radius $r$ balls that are created by round $T$ is bounded by the Pareto $r$-zooming number $N_r$ given in \rev{Definition \ref{defn:zooming}}. \revn{Moreover, in any round $t$ in which a radius $r$ ball is created, we have $\Delta_{y_t}(x_t) \leq 12 r$.}
\end{lemma}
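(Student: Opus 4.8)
The plan is to prove the two assertions separately, with the per-round bound $\Delta_{y_t}(x_t)\le 12r$ (the ``moreover'' part) carrying the main content, and the counting bound following from a packing argument on the centers of the radius-$r$ balls.

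First I would prove the PSG bound. A radius-$r$ ball $B'$ is created in round $t$ only when the selected ball $B(t)$ (its parent, of radius $\rho := r(B(t)) = 2r$) passes the refinement test $u_{B(t)}(t)\le r(B(t))$, and this extra inequality is precisely what sharpens the index estimate of Lemma \ref{lemma:indexdiff} from a factor $14$ to $12$. Routing the index of $B(t)$ through $B(t)$ itself, i.e. taking $B'=B(t)$ in the minimization defining $g^i_{B(t)}$, gives $g^i_{B(t)}(t)\le \rho + g^{i,pre}_{B(t)}(t)$. On $\text{UC}^c$ the lower confidence bound yields $\hat\mu^i_{B(t)}(t)\le \mu^i_{y_{B(t)}}(x_{B(t)}) + u_{B(t)}(t) + \rho$, so $g^{i,pre}_{B(t)}(t) = \hat\mu^i_{B(t)}(t) + u_{B(t)}(t) + \rho \le \mu^i_{y_{B(t)}}(x_{B(t)}) + 2u_{B(t)}(t) + 2\rho \le \mu^i_{y_{B(t)}}(x_{B(t)}) + 4\rho$, where the last step invokes the creation test $u_{B(t)}(t)\le\rho$. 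Assumption \ref{assm:1} together with $(x_t,y_t)\in B(t)$ gives $\mu^i_{y_{B(t)}}(x_{B(t)}) \le \mu^i_{y_t}(x_t) + \rho$, so $g^i_{B(t)}(t) \le \mu^i_{y_t}(x_t) + 6\rho = \mu^i_{y_t}(x_t) + 12r$ for every objective $i$. Finally, exactly as in the closing step of Lemma \ref{lemma:indexdiff}, I combine this with Lemma \ref{lemma:paretoOptimal}: adding $12r$ to each coordinate of $\mu_{y_t}(x_t)$ makes it weakly dominate the virtual arm $g_{B(t)}(t)$, which by Lemma \ref{lemma:paretoOptimal} is dominated by no $\mu_y(x_t)$; hence $\mu_{y_t}(x_t)+\bs{12r}$ is dominated by no arm and lies in the Pareto front, giving $\Delta_{y_t}(x_t)\le 12r$.

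Next I would establish the counting bound by an $r$-packing argument. The key geometric fact is that the centers of any two distinct radius-$r$ balls are at distance at least $r$. Suppose $B_1$ is created before $B_2$, and $B_2$ is created in round $t$ with center $c_2=(x_t,y_t)$; then $c_2\in\text{dom}_t(B(t))$, and by definition $\text{dom}_t(B(t))$ excludes every active ball of radius strictly below $r(B(t))=2r$. Since $B_1$ is active at the start of round $t$ and has radius $r<2r$, the point $c_2$ is excluded from $B_1$, so $D(c_1,c_2)\ge r$. Thus the centers of all radius-$r$ balls form an $r$-packing, and by the PSG bound just proved each such center $(x_t,y_t)$ satisfies $\Delta_{y_t}(x_t)\le 12r$, hence lies in ${\cal P}_{\mu,r}$ (the single radius-$1$ ball created at initialization is trivial, as every PSG is at most $1\le 12$). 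Therefore the centers form an $r$-packing of ${\cal P}_{\mu,r}$, and by Definition \ref{defn:zooming} their number is bounded by the $r$-packing number of ${\cal P}_{\mu,r}$, namely $N_r$.

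The main obstacle is the first part, specifically getting the tight constant $12$ rather than the $14$ of Lemma \ref{lemma:indexdiff}: this is what makes the threshold $12r$ in the definition of ${\cal P}_{\mu,r}$ consistent, so I must be careful to bound the index through $B(t)$ itself (not through its parent, as in Lemma \ref{lemma:indexdiff}) and to apply the creation condition $u_{B(t)}(t)\le r(B(t))$ at exactly the right place. The packing step is then routine; the only care needed is that the domain exclusion delivers $D(c_1,c_2)\ge r$, which matches the packing requirement regardless of whether balls are taken open or closed.
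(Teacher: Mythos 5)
Your proposal is correct and follows essentially the same route as the paper's proof: you bound the index $g^i_{B(t)}(t)$ by routing the minimization through the selected (parent) ball itself, apply the creation test $u_{B(t)}(t)\le r(B(t))$ and Assumption \ref{assm:1} to get the constant $6r(B(t)) = 12 r(B'(t))$, conclude the PSG bound via Lemma \ref{lemma:paretoOptimal}, and then count radius-$r$ balls by observing that their centers form an $r$-packing of ${\cal P}_{\mu,r}$. Your write-up is in fact slightly more careful than the paper's on two minor points (the explicit domain-exclusion argument for why centers are $r$-separated, and the treatment of the initial radius-$1$ ball), but the substance is identical.
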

\begin{proof}
	Assume that a new ball is created at round $t$ whose parent is $B(t)$. Let $B'(t)$ denote the created ball. We have,	
	\begin{align}
		g^i_{B(t)}(t) &\leq r(B(t)) + g^{i,pre}_{B(t)}(t) \notag \\
		&= \hat{\mu}^i_{B(t)} (t) + 2 r (B(t)) + {u}_{B(t)}(t)\notag \\
		&= {L}^i_{B(t)}(t) + 2 r (B(t)) + 2 {u}_{B(t)}(t) \notag \\
		&\leq \mu^i_{y_{B(t)}}(x_{B(t)} ) + 3 r (B(t)) + 2 {u}_{B(t)}(t) \notag \\
		&\leq \mu^i_{y_{B(t)}}(x_{B(t)} ) + 5 r (B(t)) \notag \\
		&\leq \mu^i_{y_{B'(t)}}(x_{B'(t)} ) + 6 r (B(t)) \notag	\\
		&\leq \mu^i_{y_{B'(t)}}(x_{B'(t)} ) + 12 r (B'(t)) \notag
	\end{align}  
	where the first inequality follows from the definition of $g^i_{B(t)}(t)$, the second inequality holds by the definition of $\text{UC}^c$, the third inequality holds due to the fact that $B(t)$ is a parent ball, the fourth inequality holds due to Assumption \ref{assm:1}, and the last inequality follows from $r(B'(t)) = r(B(t))/2$.
	Similar to the proof of Lemma \ref{lemma:indexdiff}, since $g^i_{{B}(t)}(t) - \mu^i_{y_{B'(t)}}(x_{B'(t)} )  \leq  12 r ({B}'(t))$ for all $i \in \{1,\ldots, d_r\}$ and the virtual arm is not dominated by any arm in the Pareto front by Lemma \ref{lemma:paretoOptimal}, the PSG of point \revn{$(x_{B'(t)} ,y_{B'(t)}) = (x_t, y_t)$} is bounded by $12 r ({B}'(t))$. This implies that center of the ball created in any round $t$ has a PSG that is at most $12 r ({B}'(t))$. Thus, the center of $B'(t)$ is in ${\cal P}_{\mu, r ({B}'(t))}$.
	Next, consider any two balls $B$ and $B'$ with radius $r$ created by PCZ. Based on the ball creation and domain update rules of PCZ, the distance between the centers of these balls must be at least $r$. As a result, the maximum number of radius $r$ balls created is bounded by the $r$-packing number of ${\cal P}_{\mu, r}$, which is $N_r$.
\end{proof}

The following lemma (proof is given in the supplemental document), bounds the regret of PCZ in terms of $N_r$ by using the results in Lemmas 3 and 4.
\begin{lemma} \label{lemma:pregt}
On event $\text{UC}^c$, the Pareto regret of PCZ by round $T$ is bounded by $ \text{Reg}(T) \leq 28 T r_0 + \sum_{r = 2^{-i}: i \in \mathbb{N}, r_0 \leq r \leq 1} 56 r^{-1} N_r \log (  2 \sqrt{2} d_r T^{\frac{3}{2}} e / \delta) $ for any $r_0 \in (0,1]$.
\end{lemma}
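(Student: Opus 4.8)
\emph{Proof proposal.} The plan is to convert the Pareto regret into a weighted count of how often each active ball is selected, and then to organize this count over the dyadic radius levels using the zooming-number bound of Lemma~\ref{lemma:nr}. First I would invoke Lemma~\ref{lemma:indexdiff}: on $\text{UC}^c$ every round $t$ satisfies $\Delta_{y_t}(x_t) \leq 14\,r(B(t))$, so that
\begin{align}
\text{Reg}(T) \leq 14 \sum_{t=1}^{T} r(B(t)) = 14 \sum_{B \in {\cal B}'(T)} r(B)\, N_B(T+1) . \notag
\end{align}
Here I use that exactly one ball is selected per round, so $\sum_{B \in {\cal B}'(T)} N_B(T+1) = T$. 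The whole problem is thus reduced to controlling the selection counts $N_B(T+1)$, grouped by radius.

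The key technical ingredient is a per-ball bound on $N_B(T+1)$. Because $u_B = \sqrt{2 A_B / N_B}$ is decreasing in $N_B$, the child-creation test $u_{\hat{B}} \leq r(\hat{B})$ first passes precisely when $N_B$ reaches $2 A_B / r(B)^2$; I would use this saturation threshold to argue that a ball of radius $r$ accumulates at most $O(A_B / r^2)$ selections, i.e. $N_B(T+1) \leq 2 A_B / r(B)^2$ up to the rounds that actually spawn children (which I charge to the finer level $r/2$). Combining this with Lemma~\ref{lemma:nr}, which caps the number of radius-$r$ balls by the Pareto $r$-zooming number $N_r$, the total number of selections of radius-$r$ balls is at most $2 A_B N_r / r^2$.

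I would then sum over the dyadic levels $r = 2^{-i}$ and split at $r_0$. For the fine levels $r < r_0$ I discard the per-ball bound and simply use $\sum_{B:\, r(B) < r_0} N_B(T+1) \leq T$, so that their contribution to $14 \sum_B r(B)\, N_B(T+1)$ is at most $14\, r_0 T$, which is absorbed into $28\,T r_0$. For the coarse levels $r_0 \leq r \leq 1$ I plug in the per-ball count to get a contribution of order $\sum_{r} 14 \cdot r \cdot (2 A_B N_r / r^2) = 28 \sum_r r^{-1} N_r A_B$, and finally use the elementary inequality $A_B = 1 + 2\log(2\sqrt{2} d_r T^{3/2}/\delta) \leq 2\log(2\sqrt{2} d_r T^{3/2} e/\delta)$ to rewrite $28 A_B$ as $56 \log(2\sqrt{2} d_r T^{3/2} e/\delta)$, matching the stated bound.

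The main obstacle is the per-ball selection bound. Unlike in the single-objective zooming algorithm, a saturated ball is not immediately retired: it may still be chosen at points of its domain not yet covered by smaller balls, each such choice spawning a new child. The delicate part is therefore to show that these post-saturation selections are few (their centers are mutually $r/2$-separated inside a radius-$r$ ball) so that they contribute only a lower-order term, and to track the constants through the dyadic sum so that the factors $28$ and $56$ come out as claimed. The sub-Gaussian confidence structure entering $A_B$ is exactly what fixes the saturation threshold $2A_B/r^2$, tying the count to the $\log(2\sqrt{2} d_r T^{3/2} e/\delta)$ factor.
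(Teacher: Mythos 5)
Your proposal follows essentially the same route as the paper's proof: the per-round bound $\Delta_{y_t}(x_t) \leq 14\,r(B(t))$ from Lemma \ref{lemma:indexdiff}, the saturation count of roughly $1 + 2A_B/r^2$ selections per ball before it becomes a parent, charging each child-creation round to the child's (finer) level, the cap of $N_r$ balls per dyadic level from Lemma \ref{lemma:nr}, and the split at $r_0$ with the fine levels absorbed into $28\,T r_0$. The one detail you leave open---the rate at which the spawn rounds are charged---is settled in the paper by the second statement of Lemma \ref{lemma:nr} (regret at most $12\,r(B'(t))$ in any creation round), which is what keeps the constants at the claimed $28$ and $56$ without further slack arguments.
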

The following theorem gives a high probability Pareto regret bound for PCZ.
\begin{theorem} \label{theorem:1}
For any $p>0$, the Pareto regret of PCZ by round $T$ is bounded \rev{with probability at least $1-\delta$} (on event $\text{UC}^c$) by
\begin{align}
\text{Reg}(T)  &\leq ( 28 + 112 p  \log (  2 \sqrt{2} d_r T^{\frac{3}{2}} e / \delta) ) T^{\frac{1+d_p}{2+d_p}} \notag
\end{align}
where $d_p$ is given in Definition \ref{defn:zooming}.
\end{theorem}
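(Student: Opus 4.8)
The plan is to start from the regret decomposition provided in Lemma~\ref{lemma:pregt}, which already bounds $\text{Reg}(T)$ on the event $\text{UC}^c$ by the expression $28 T r_0 + \sum_{r} 56 r^{-1} N_r \log(2\sqrt{2} d_r T^{3/2} e/\delta)$, where the sum ranges over dyadic radii $r = 2^{-i}$ with $r_0 \le r \le 1$. The whole task reduces to controlling the summation using the definition of the Pareto zooming dimension and then optimizing over the free cutoff $r_0$. First, I would invoke Definition~\ref{defn:zooming}(iii): for the given $p>0$ and $d_p = d_p(p)$, we have $N_r \le p\, r^{-d_p}$ for all $r \in (0,1)$. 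Substituting this bound into the summand yields $56 \log(\cdots)\, r^{-1} N_r \le 56 p \log(\cdots)\, r^{-(1+d_p)}$, converting the combinatorial sum into a geometric-type sum in the dyadic radii.

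Next I would bound the dyadic sum $\sum_{r = 2^{-i},\, r_0 \le r \le 1} r^{-(1+d_p)}$. Since the radii double as $i$ decreases, the terms $r^{-(1+d_p)}$ form a geometric progression dominated by its largest term, which corresponds to the smallest radius $r_0$. Hence the sum is at most a constant multiple of $r_0^{-(1+d_p)}$; more precisely, summing the geometric series gives something of order $\frac{r_0^{-(1+d_p)}}{1 - 2^{-(1+d_p)}} \le 2 r_0^{-(1+d_p)}$. Folding this into the bound, $\text{Reg}(T)$ on $\text{UC}^c$ is at most $28 T r_0 + 112 p \log(2\sqrt{2} d_r T^{3/2} e/\delta)\, r_0^{-(1+d_p)}$ after absorbing the geometric-series constant into the factor.

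The final step is to optimize over $r_0 \in (0,1]$. Balancing the two competing terms $T r_0$ (increasing in $r_0$) and $r_0^{-(1+d_p)}$ (decreasing in $r_0$) suggests setting them of the same order; solving $T r_0 \asymp r_0^{-(1+d_p)}$ gives $r_0 \asymp T^{-1/(2+d_p)}$. Plugging this choice in, both terms become of order $T \cdot T^{-1/(2+d_p)} = T^{(1+d_p)/(2+d_p)}$, matching the claimed exponent. I would verify that this $r_0$ lies in $(0,1]$ for $T \ge 1$, and collect the constants so that the leading coefficient is exactly $28 + 112 p \log(2\sqrt{2} d_r T^{3/2} e/\delta)$ as stated. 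The high-probability qualifier follows because we have already shown $\Pr(\text{UC}) \le \delta$ earlier in the analysis, so $\text{UC}^c$ holds with probability at least $1-\delta$.

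The main obstacle is mostly bookkeeping rather than conceptual: ensuring the geometric-series constant from the dyadic sum is absorbed cleanly so that the stated coefficient $112 p$ comes out exactly (the factor of $2$ from the geometric sum must cancel against a factor hidden in the constants of Lemma~\ref{lemma:pregt}), and confirming that the chosen $r_0 = T^{-1/(2+d_p)}$ is admissible so that no boundary terms are dropped. A secondary subtlety is that $d_p$ depends on $p$ through $N_r \le p\, r^{-d_p}$, so I must be careful to use the bound with the \emph{same} constant $p$ throughout rather than an arbitrary packing constant, consistent with the convention $d_p := d_p(p)$ fixed at the start of the theorem.
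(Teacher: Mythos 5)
Your proposal is correct and follows essentially the same route as the paper: apply Lemma~\ref{lemma:pregt}, substitute $N_r \leq p\, r^{-d_p}$ from Definition~\ref{defn:zooming}, bound the dyadic sum by a geometric series (yielding the factor $2$ that turns $56p$ into $112p$), and set $r_0 = T^{-1/(2+d_p)}$. In fact you make explicit the geometric-series step that the paper leaves implicit, and your bookkeeping of the constants matches the stated bound exactly.
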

\begin{proof}
Using Definition \ref{defn:zooming} and the result of Lemma \ref{lemma:pregt} on event $\text{UC}^c$, we have
\begin{align}
&\text{Reg}(T) \leq 28 T r_0  \notag \\
&+ \sum_{r = 2^{-i}: i \in \mathbb{N}}^{r_0 \leq r \leq 1} 56 r^{-1} p r^{-d_p} \log (  2 \sqrt{2} d_r T^{\frac{3}{2}} e / \delta)  \notag \\
&\leq  28 T r_0 
 +  56  p \log (  2 \sqrt{2} d_r T^{\frac{3}{2}} e / \delta)  \sum_{r = 2^{-i}: i \in \mathbb{N}}^{r_0 \leq r \leq 1}  r^{-d_p - 1} . \notag
\end{align}
We obtain the final result by setting $r_0 = T^{\frac{-1}{2+d_p}}$.
\end{proof}
\begin{corollary}
When PCZ is run with $\delta = 1/T$, then for any $p>0$, the expected Pareto regret of PCZ by round $T$ is bounded by
\begin{align}
\expect{\text{Reg}(T)} \leq   
& (28 + 112 p  \log ( 2 \sqrt{2} d_r T^{\frac{\rev{5}}{2}} e) ) T^{\frac{1+d_p}{2+d_p}} 
+ C_{\max} .
 \notag
\end{align}
\end{corollary}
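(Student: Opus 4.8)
The plan is to obtain the corollary directly from the expected-regret decomposition \eqref{eq:expparreg} together with the high-probability bound of Theorem \ref{theorem:1}, by specializing the confidence parameter to $\delta = 1/T$. No new estimates are needed; the argument is purely a substitution.

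First I would recall that \eqref{eq:expparreg} already establishes
\[
\expect{\text{Reg}(T)} \leq C_{\max} T \delta + \expect{\text{Reg}(T) | \text{UC}^c} .
\]
Setting $\delta = 1/T$ collapses the first term to $C_{\max} T \cdot (1/T) = C_{\max}$, which produces the additive constant appearing in the claimed bound.

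Next I would control the conditional expectation $\expect{\text{Reg}(T) | \text{UC}^c}$. The essential point is that Theorem \ref{theorem:1} bounds $\text{Reg}(T)$ on the event $\text{UC}^c$ surely (with probability one on that event), not merely in expectation; consequently the conditional expectation inherits exactly the same bound. Substituting $\delta = 1/T$ into the logarithmic factor of the theorem replaces $T^{3/2}/\delta$ by $T^{3/2} \cdot T = T^{5/2}$, so $\log(2\sqrt{2} d_r T^{3/2} e/\delta)$ becomes $\log(2\sqrt{2} d_r T^{5/2} e)$, and hence
\[
\expect{\text{Reg}(T) | \text{UC}^c} \leq \left(28 + 112 p \log(2\sqrt{2} d_r T^{5/2} e)\right) T^{\frac{1+d_p}{2+d_p}} .
\]
Adding the two contributions yields precisely the stated bound.

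There is essentially no obstacle in this argument: the only step warranting a moment's care is recognizing that Theorem \ref{theorem:1} furnishes a \emph{sure} bound conditioned on $\text{UC}^c$, so that passing from the high-probability statement to a statement about $\expect{\text{Reg}(T) | \text{UC}^c}$ requires no additional integration over the randomness of the ball collection ${\cal B}(T)$ or the sub-Gaussian noise. Everything else is the bookkeeping of the substitution $\delta = 1/T$.
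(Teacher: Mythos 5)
Your proof is correct and follows exactly the paper's (very terse) argument: the paper's proof is simply ``Theorem \ref{theorem:1} is used in \eqref{eq:expparreg} with $\delta = 1/T$,'' and you have spelled out precisely that substitution, including the key observation that Theorem \ref{theorem:1} gives a sure bound on the event $\text{UC}^c$ so that it bounds the conditional expectation directly.
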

	\begin{proof}
Theorem \ref{theorem:1} is used in \eqref{eq:expparreg} with $\delta = 1 /T$.
	\end{proof}
	\section{LOWER BOUND}
	It is shown in \cite{slivkins2014contextual} that for the contextual bandit problem with similarity information the regret lower bound is $\Omega (T^{(1+d_z)/(2+d_z)})$, where $d_z$ is the contextual zooming dimension. 
    We use this to give a lower bound on the Pareto regret. Consider an instance of the multi-objective contextual bandit problem where $\mu^i_y(x) = \mu^j_y(x)$, $\forall (x,y) \in {\cal P}$ and $\forall i,j \in \{1,\ldots,d_r\}$, and $\kappa^i_t = \kappa^j_t$, $\forall t \in \{1,\ldots,T\}$ and $\forall i,j \in \{1,\ldots,d_r\}$. In this case, the contextual zooming dimension of all objectives are equal (i.e., all $d_z$s are equal). Moreover, by definition of the Pareto zooming dimension $d_p = d_z$.  Therefore, this case is equivalent to the single objective contextual bandit problem. Hence, our regret bound becomes $\tilde O (T^{(1+d_z)/(2+d_z)})$ which matches with the lower bound up to a logarithmic factor. 
    
	\section{CONCLUSION}
	In this paper we propose the multi-objective contextual bandit problem which involves multiple and possibly conflicting objectives. Algorithms designed to deal with a single objective can be highly unfair in terms of their rewards in the other objectives. To overcome this issue, we present the Pareto Contextual Zooming (PCZ) algorithm which achieves $\tilde O (T^{(1+d_p)/(2+d_p)})$ Pareto regret where $d_p$ is the Pareto zooming dimension. PCZ randomly alternates between the arms in the estimated Pareto front and ensures the arms in this set are fairly selected. Future work will focus on evaluating the Pareto regret and the fairness of the proposed algorithm in real-world datasets. 
	
	\subsubsection*{Acknowledgments}
	
	This material is based upon work supported by the Scientific and Technological Research Council of Turkey (TUBITAK) under 3501 Program Grant No. 116E229.

	
	\bibliographystyle{ieeetr}
	\bibliography{references}

	\section*{APPENDIX (SUPPLEMENTAL DOCUMENT)}
	\subsection*{APPENDIX A - A CONCENTRATION INEQUALITY \cite{abbasi2011improved,russo2014learning}} \label{app:concentration} 
	Consider a ball $B$ for which the rewards of objective $i$ are generated by a process $\{ R^i_{B}(t) \}_{t=1}^T$ with mean $\mu^i_{B}= \mr{E} [R^i_{B}(t)]$, where the noise $R^i_{B}(t) - \mu^i_{B}$ is 1-sub-Gaussian. Recall that $B(t)$ is the ball selected in round $t$ and $N_B(T)$ is the number of times ball $B$ is selected by the beginning of round $T$. Let $\hat{\mu}^i_B(T) = \sum_{t=1}^{T-1} \mr{I} (B(t) =B ) R^i_{B}(t) / N_B(T)$ for $N_B(T) >0$ and $\hat{\mu}^i_B(T) = 0$ for $N_B(T) = 0$. Then, for any $0 < \theta < 1$ with probability at least $1-\theta$ we have
	\begin{align}
	&\left| \hat{\mu}^i_{B}(T)  - \mu^i_B \right| \notag \\
	& \leq \sqrt{  \frac{2}{N_B(T)} 
		\left(       
		1 + 2 \log \left(  \frac{ (1 + N_B(T) )^{1/2} } {\theta}    \right)  
		\right)  }  ~~ \forall T \in \mathbb{N}.   \notag
	\end{align}

	\subsection*{APPENDIX B - PROOF OF LEMMA 1}

	From the definitions of $\tilde{L}^i_{B}(t)$, $\tilde{U}^i_{B}(t)$ and $\tilde{\text{UC}}^i_{B}$, it can be observed that the event $\text{UC}^i_{B}$ happens when $\tilde{\mu}^i_{B}(t)$ remains away from \rev{$\mu^i_{y_{B}}(x_{B})$ for some $t \in \{0,\ldots,\rev{N_{B}(T+1)} \}$.} Using this information, we can use the concentration inequality given in Appendix A.
	In this formulation expected rewards of the arms must be equal in all time steps, but in our case, $\mu^i_{\tilde{y}_{B}(t)}(\tilde{x}_{B}(t))$ changes since the elements of $\{ \tilde{x}_{B}(t), \tilde{y}_{B}(t) \}_{t=1}^{\rev{N_{B}(T+1)}}$ are not identical which makes distributions of $\tilde{R}^i_{B}(t) $, $t \in \{1, \ldots, \rev{N_{B}(T+1)} \}$ different.
	
	\rev{In order to overcome this issue, we use the sandwich technique proposed in \cite{tekin2017adaptive} and later used in \cite{tekin2017multi}.} \rvs{For any ball $B \in {\cal B}(T)$, we have $\Pr(\mu^i_{{y}_{B}}({x}_{B}) \notin [ \tilde{L}^i_{B}(0) - r(B) , \tilde{U}^i_{B}(0) + r(B) ]) = 0 $ since $\tilde{\mu}^i_{B}(0) = 0$, $\tilde{L}^i_{B}(0) = - \infty $ and $\tilde{U}^i_{B}(0) = \infty$. Thus, for $B \in {\cal B}(T) \setminus {\cal B}'(T)$, we have $\Pr( \text{UC}_B | {\cal B}(T) ) = 0$. Hence, we proceed by bounding the probabilities of the events $\{\mu^i_{{y}_{B}}({x}_{B}) \notin [ \tilde{L}^i_{B}(t) - r(B) , \tilde{U}^i_{B}(t) + r(B) ]\}$, for $t > 0$ and for the balls in ${\cal B}'(T)$.} Recall that
	$\tilde{R}^i_{B}(t) = \mu^i_{\tilde{y}_{B}(t)} ( \tilde{x}_{B}(t))  + \tilde{\kappa}^i_{B}(t)$ 
	and
	$\tilde{\mu}^i_{B}(t) 
	=  \sum_{l=1}^{{\rev{t}}}  \tilde{R}^i_{B}(l)  / {\rvs{t}}$ \rvs{(for $t > 0$ and $B \in {\cal B}'(T)$).}     
	For each $i \in \{1, \ldots, d_r\}$, $B \in \rev{{\cal B}'(T)}$, let 
	\begin{align*}
	\overline{\mu}^i_{B} &= \sup_{(x,y) \in \text{dom}(B)} \mu^i_y(x) ~\text{ and }~
	\underline{\mu}^i_{B} = \inf_{(x,y) \in \text{dom}(B)} \mu^i_y(x) .
	\end{align*}
	We define two new sequences of random variables, whose sample mean values will lower and upper bound $\tilde{\mu}^i_{B}(t)$. The {\em best sequence} is defined as 
	$\{  \bar{R}^i_{B}(t) \}_{t=1}^{ \rev{N_{B}(T+1)} }$ where
	$\overline{R}^i_{B}(t) :=   \overline{\mu}^i_{B} + \tilde{\kappa}^i_{B}(t) $,
	and the {\em worst sequence} is defined as 
	$\{  \underline{R}^i_{B}(t) \}_{t=1}^{ \rev{N_{B}(T+1)} }$ where 
	$\underline{R}^i_{B}(t) := \underline{\mu}^i_{B} + \tilde{\kappa}^i_{B}(t) $.   
	Let 
	$\overline{\mu}^i_{B}(t) := \sum_{l=1}^{\rev{t}} \overline{R}^i_{B}(l)  / \rev{t}$ and 
	$\underline{\mu}^i_{B}(t) := \sum_{l=1}^{\rev{t}} \underline{R}^i_{B}(l) / \rev{t}$. 
	We have
	\begin{align}
	\underline{\mu}^i_{B}(t) \leq \tilde{\mu}^i_{B}(t) \leq  \overline{\mu}^i_{B}(t)  
	~~\forall t \in \rvs{\{1,\ldots,N_{B}(T+1) \}} . \notag
	\end{align}
	Let
	\begin{align}
	\overline{L}^i_{B}(t) &:=  \overline{\mu}^i_{B}(t) - \tilde{u}_{B}(t)       \notag \\
	\overline{U}^i_{B}(t) &:=  \overline{\mu}^i_{B}(t) + \tilde{u}_{B}(t)     \notag \\
	\underline{L}^i_{B}(t) &:=   \underline{\mu}^i_{B}(t) - \tilde{u}_{B}(t)      \notag \\
	\underline{U}^i_{B}(t) &:=  \underline{\mu}^i_{B}(t) + \tilde{u}_{B}(t)    .  \notag
	\end{align}
	It can be shown that 
	\begin{align}
	& \{ \mu^i_{{y}_{B}}( {x}_{B} ) \notin [\tilde{L}^i_{B}(t)- r(B), \tilde{U}^i_{B}(t) + r(B)]  \} \label{eqn:unionbound1} \\
	& \subset \{ \mu^i_{{y}_{B}}( {x}_{B} ) 
	\notin [\overline{L}^i_{B}(t)  - r(B), 
	\overline{U}^i_{B}(t)  + r(B)]  \}  \notag \\
	&\cup  \{ \mu^i_{{y}_{B}}( {x}_{B} ) 
	\notin [\underline{L}^i_{B}(t)   - r(B), 
	\underline{U}^i_{B}(t)  + r(B)]  \} . \notag
	\end{align}
	
	Moreover, the following inequalities can be obtained from Assumption \rev{1}:
	\begin{align}
	& \mu^i_{{y}_{B}}( {x}_{B} )  \leq \overline{\mu}^i_{B}
	\leq \mu^i_{{y}_{B}}( {x}_{B} ) + \rev{r(B)}  \label{eqn:bestbound} \\
	& \mu^i_{{y}_{B}}( {x}_{B} )  - \rev{r(B)}   \leq \underline{\mu}^i_{B}
	\leq \mu^i_{{y}_{B}}( {x}_{B} ) . \label{eqn:worstbound}  
	\end{align}
	
	Using \eqref{eqn:bestbound} and \eqref{eqn:worstbound} it can be shown that
	\begin{align}
	\{ \mu^i_{{y}_{B}}( {x}_{B} ) &\notin [\overline{L}^i_{B}(t)  - r(B), \overline{U}^i_{B}(t)    + r(B)]  \}   \notag \\ 
	&\subset  \{ \overline{\mu}^i_{B} \notin [\overline{L}^i_{B}(t)  , \overline{U}^i_{B}(t) ]  \}, \notag \\
	\{ \mu^i_{{y}_{B}}( {x}_{B} ) &\notin [\underline{L}^i_{B}(t) - r(B), \underline{U}^i_{B}(t)  + r(B)]  \}  \notag \\ 
	&\subset  \{ \underline{\mu}^i_{B} \notin [\underline{L}^i_{B}(t) , \underline{U}^i_{B}(t) ]  \}  .  \notag
	\end{align}
	
	Plugging this to \eqref{eqn:unionbound1}, we get
	\begin{align*}
	& \{ \mu^i_{{y}_{B}}( {x}_{B} ) \notin [\tilde{L}^i_{B}(t)- r(B), \tilde{U}^i_{B}(t) + r(B)]  \}  \\
	&\subset 
	\{ \overline{\mu}^i_{B} \notin [\overline{L}^i_{B}(t)  , \overline{U}^i_{B}(t) ]  \}
	\bigcup
	\{ \underline{\mu}^i_{B} \notin [\underline{L}^i_{B}(t) , \underline{U}^i_{B}(t) ]  \} . 
	\end{align*}
	
	Using the equation above and the union bound we obtain
	\begin{align}
	\Pr( \text{UC}^i_{B} | {\cal B}(T) ) 
	&\leq \Pr \left( \bigcup_{\rvs{t=1}}^{ \rev{N_{B}(T+1)} } \{ \overline{\mu}^i_{B}  \notin [\overline{L}^i_{B}(t)  , \overline{U}^i_{B}(t) ]  \}  \right)   \notag \\
	& \hspace{-0.1in} + \Pr \left( \bigcup_{\rvs{t=1}}^{ \rev{N_{B}(T+1)} } \{ \underline{\mu}^i_{B} 
	\notin [\underline{L}^i_{B}(t)  , \underline{U}^i_{B}(t)]  \} \right) . \notag
	\end{align}
	Both terms on the right-hand side of the inequality above can be bounded using the concentration inequality in \rev{Appendix A}. \rvs{Using 
		$\theta = \delta / (2 d_r T)$}, in \rev{Appendix A} gives
	$\Pr( \text{UC}^i_{B} | {\cal B}(T)) \leq \delta /( d_r T)$,
	since \rev{$1 + N_{B}(t) \leq 1 + N_B(T+1) \leq 2T$}. 
	Then, using the union bound over all objectives, we obtain
	$\Pr( \text{UC}_B | {\cal B}(T)) \leq \delta / T$. 
	
	\subsection*{APPENDIX C - PROOF OF LEMMA 5}	
	
	The maximum number of times a radius $r$ ball $B$ can be selected before it becomes a parent ball is upper bounded by $1 + 2 r^{-2} (1 +  2 \log (   2 \sqrt{2} d_r T^{\frac{3}{2}} / \delta) )$. From the result of Lemma 3, we know that the Pareto regret in each of these rounds is upper bounded by $14 r$. Note that after ball $B$ becomes a parent ball, it will create a new radius $r/2$ child ball every time it is selected. 
	From Lemma 4, we know that the Pareto regret in each of these rounds is bounded above by $12 (r/2)$. Therefore, we can include the Pareto regret incurred in a round in which a new child ball with radius $r$ is created from a parent ball as a part of the child ball's (total) Pareto regret. Hence, the Pareto regret incurred in a radius $r$ ball is upper bounded by 
	\begin{align*}
	& 14r \left( 1 + 2 r^{-2} (1 +  2 \log (   2 \sqrt{2} d_r T^{\frac{3}{2}} / \delta) ) \right) + 12r \\
	&\leq 14 r \left( 2 + 2 r^{-2} (1 +  2 \log (   2 \sqrt{2} d_r T^{\frac{3}{2}} / \delta) ) \right) \\
	&\leq 56 r^{-1} \log (   2 \sqrt{2} d_r T^{\frac{3}{2}} e / \delta) .
	\end{align*}
	
	Let $r_l := 2^{\ceil*{\log (r_0) / \log (2)} }$. We have $r_0 /2 \leq r_l /2 \leq r_0 \leq r_l \leq 2 r_0$. The one-round Pareto regret of the balls whose radii are smaller than $r_l$ is bounded by $14 r_l$ on event $\text{UC}^c$ according to Lemma 3. Also, we know that $14 r_l \leq 28 r_0$ by the above inequality. Therefore, the Pareto regret due to all balls with radii smaller than $r_l$ by time $T$ is bounded by $28 T r_0$, and the Pareto regret due to all balls with radii $r = 2^{-i} \geq r_0$ is bounded by $56 r^{-1} N_r \log (   2 \sqrt{2} d_r T^{\frac{3}{2}} e / \delta)$. Thus, summing this up for all possible balls, we obtain
	the following Pareto regret bound on event $\text{UC}^c$:
	\begin{align}
	\text{Reg}(T) &\leq 28 T r_0 \notag \\
	&+ \sum_{r = 2^{-i}: i \in \mathbb{N}, r_0 \leq r \leq 1} 56 r^{-1} N_r \log (  2 \sqrt{2} d_r T^{\frac{3}{2}} e / \delta) . \notag 
	\end{align}

	\subsection*{APPENDIX D- SIMULATIONS}
	
	We evaluate the performance of PCZ on a synthetic dataset. We take ${\cal X} = [0,1]$, ${\cal Y} = [0,1]$, and generate $\mu^1_y(x)$ and $\mu^2_y(x)$ as shown in Figure \ref{fig:p1}. 
	To generate $\mu^1_y(x)$, we first define a line by equation $8x+10y=8$ and let $y_1(x) = (8-8x)/10$. For all context arm pairs $(x,y)$, we set $\mu^1_{y}(x) = \max \{ 0, (1-5|y - y_1(x)|) \}$.
	Similarly, to generate $\mu^2_y(x)$, we define the line $8x + 10y = 10$ and let $y_2(x) = (10-8x)/10$. 
	Then, we set $\mu^2_y(x) = \max \{ 0, (1-5(y_2(x) - y)) \}$ for $y \leq y_2(x)$ and 
	$\mu^2_y(x) = \max \{ 0, (1-(y - y_2(x))/4) \}$ for $y > y_2(x)$.
	
	Based on the definitions given above, the Pareto optimal arms given context $x$ lie in the interval $[y_1(x), y_2(x)]$. 
	To evaluate the fairness of PCZ, we define six bins that correspond to context-arm pairs in the Pareto front. Given context $x$, the $1$st bin contains all arms in the interval $[y_1(x), y_1(x) + 1/30]$ and the $i$th bin $i \in \{2,\ldots,6\}$ contains all arms in the interval $(y_1(x) + (i-1)/30, y_1(x) + i/30]$.
	Simply, the first three bins include the Pareto optimal arms whose expected rewards in the first objective are higher than the expected rewards in the second objective and the last three bins include the Pareto optimal arms whose expected rewards in the second objective are higher than the expected rewards in the first objective.
	
	We assume that the reward of arm $y$ in objective $i$ given context $x$ is a Bernoulli random variable with parameter $\mu^i_y(x)$. In addition, at each round $t$, the context $x_t$ is sampled from the uniform distribution over ${\cal X}$. 
	
	\begin{figure}[htb] 
		\begin{minipage}[b]{1.0\linewidth}
			\centering
			\centerline{\includegraphics[width=6.55cm]{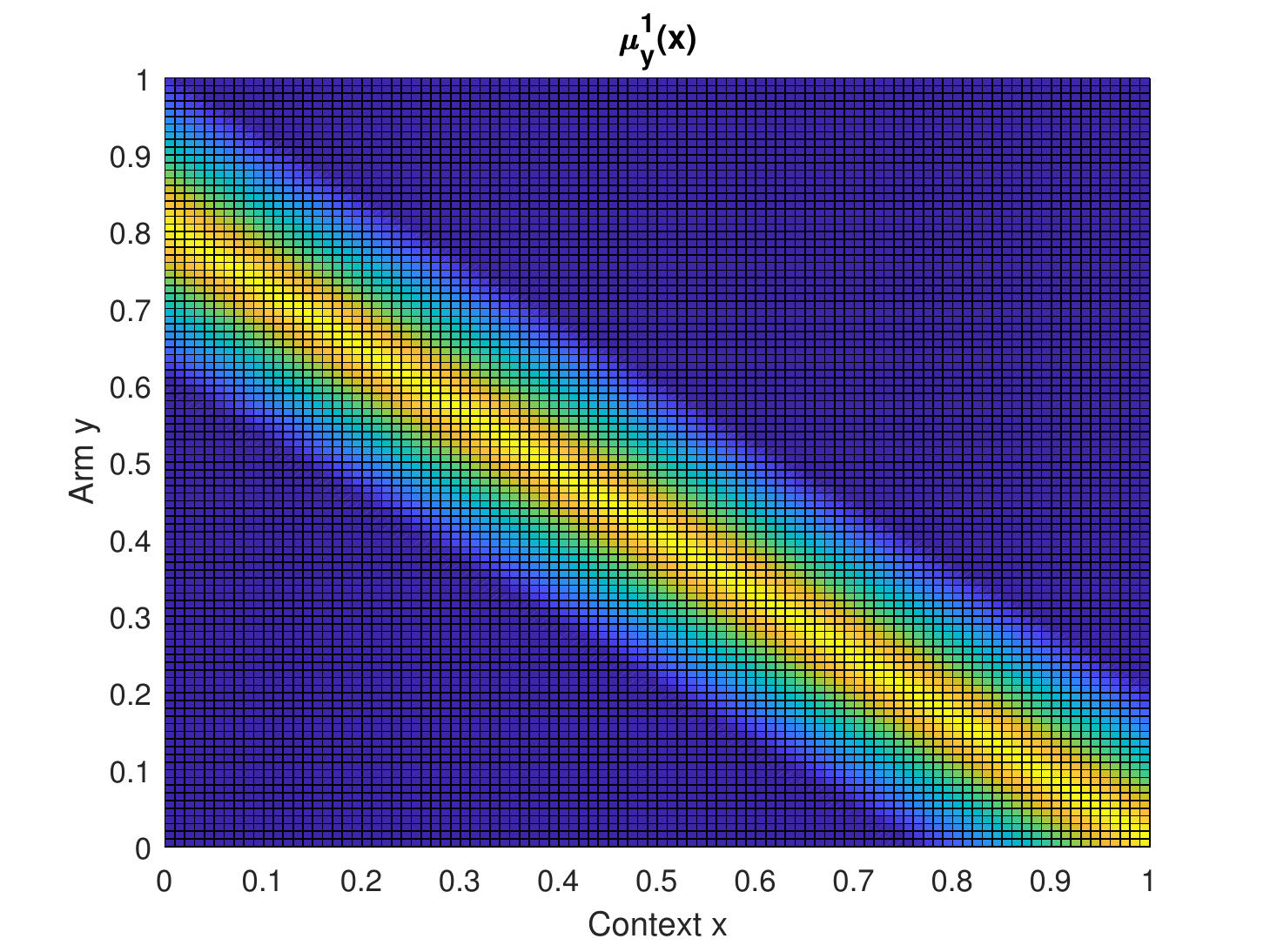}}
			
		\end{minipage}
		
		\begin{minipage}[b]{1.0\linewidth}
			\centering
			\centerline{\includegraphics[width=6.55cm]{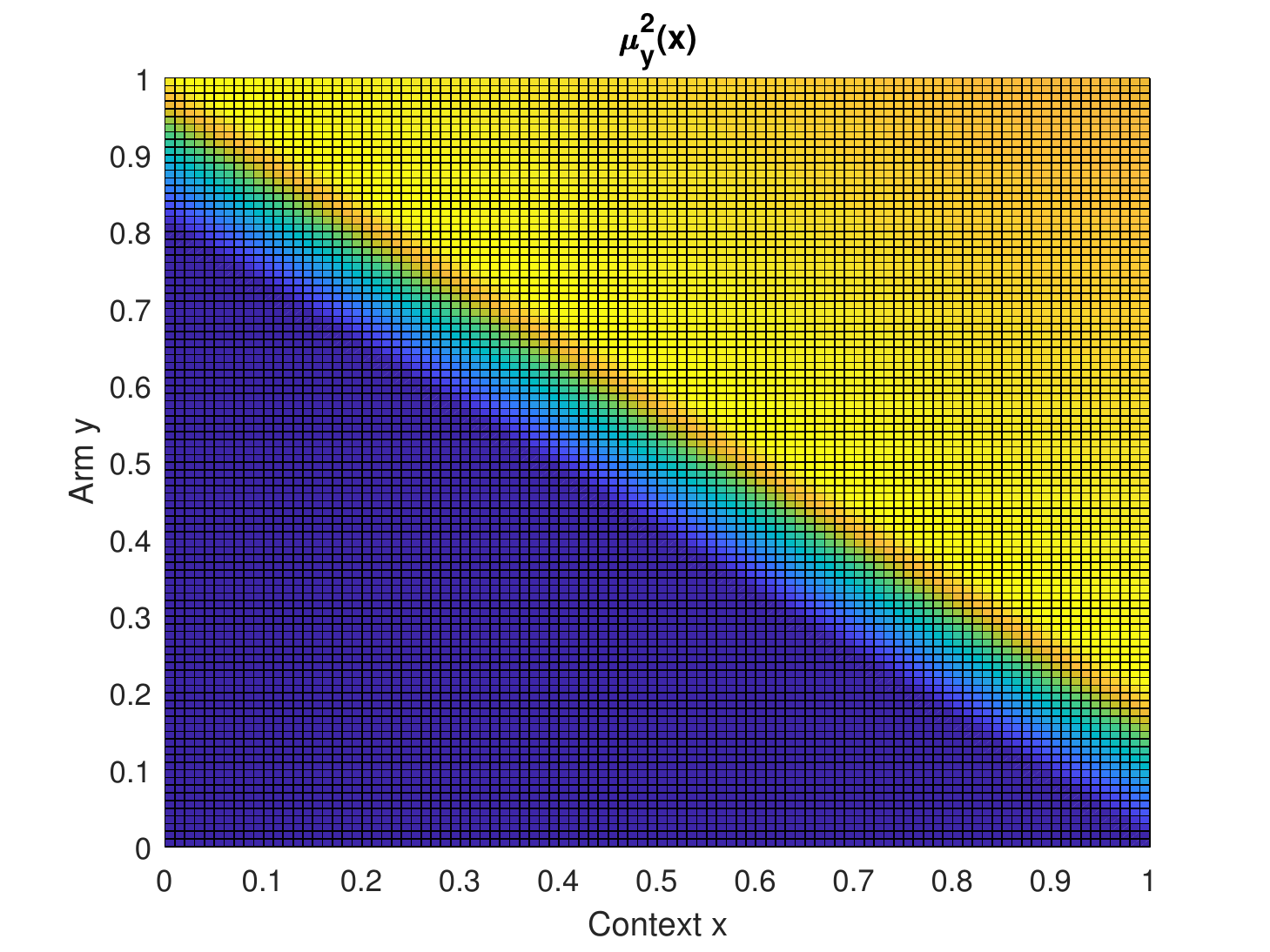}}
			
		\end{minipage}
		
		\caption{Expected Rewards of Context-Arm Pairs (Yellow Represents $1$, Dark Blue Represents $0$)}
		%
		\label{fig:p1}
	\end{figure}

	\begin{figure}[htb] 
		\begin{minipage}[b]{1.0\linewidth}
			\centering
			\centerline{\includegraphics[width=6.55cm]{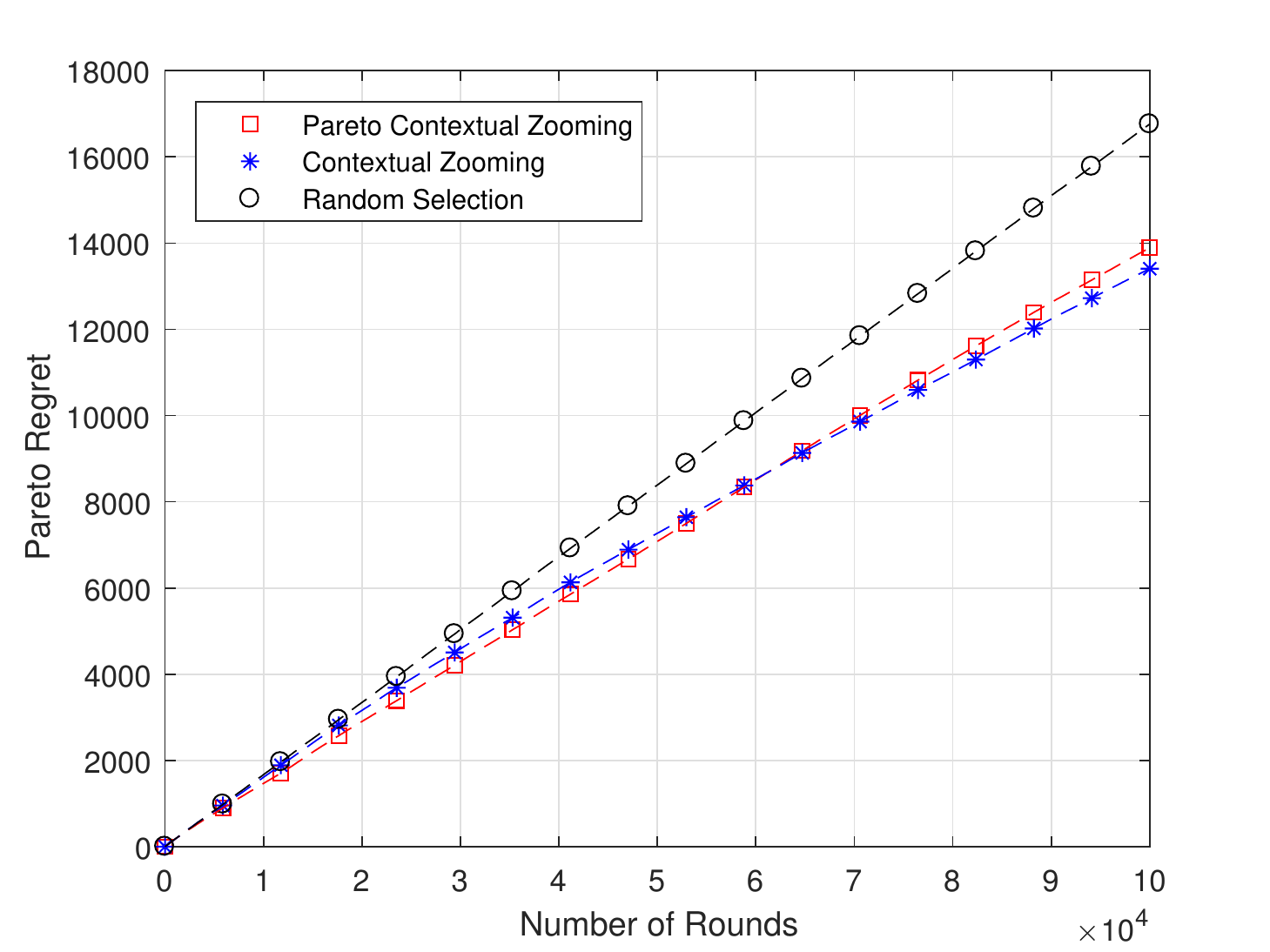}}
			
		\end{minipage}
		
		\begin{minipage}[b]{1.0\linewidth}
			\centering
			\centerline{\includegraphics[width=6.55cm]{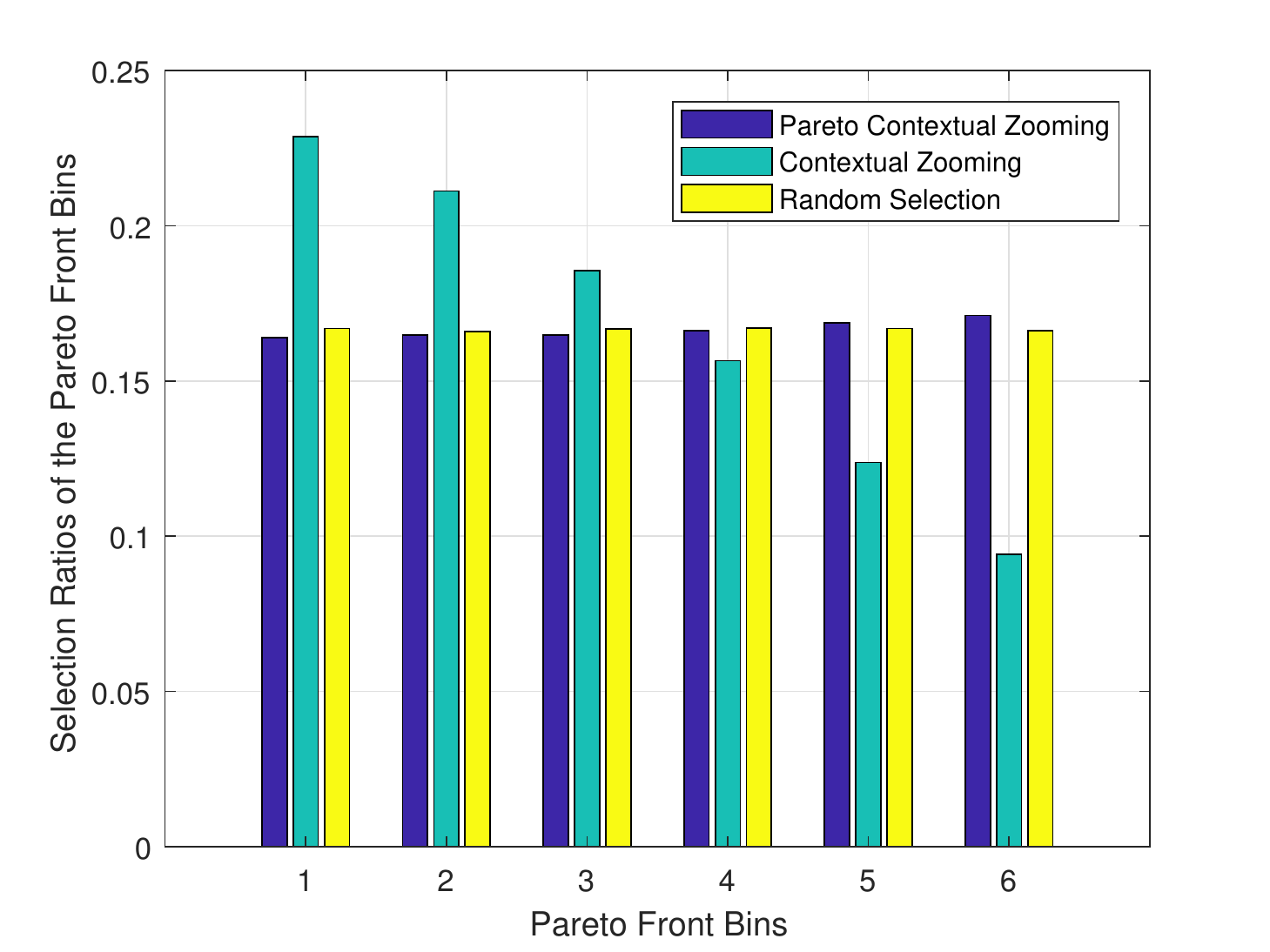}}
			
		\end{minipage}
		
		\caption{(i) Pareto Regret vs. Number of Rounds (ii) Selection Ratio of the Pareto Front Bins}
		%
		\label{fig:p2}
	\end{figure}
	
	We compare our algorithm with Contextual Zooming \cite{slivkins2014contextual} and Random Selection, which chooses in each round an arm uniformly at random from ${\cal Y}$. Contextual Zooming only uses the rewards in the first objective to update itself. \revn{Both PCZ and Contextual Zooming uses scaled Euclidean distance.\footnote{We set $D((x,y),(x',y')) = \sqrt{(x-x')^2 + (y-y')^2}/\sqrt{2}$. While this choice does not satisfy Assumption 1, we use this setup to illustrate that learning is still possible when the distance function is not perfectly known by the learner.}} We choose $\delta = 1/T$ in PCZ, set $T=10^5$, run each algorithm $100$ times, and report the average of the results in these runs. 
	
	The Pareto regret is reported in Figure \ref{fig:p2}(i) as a function of the number of rounds. It is observed that the Pareto regret of PCZ at $T=10^5$ is $3.61 \%$ higher than that of Contextual Zooming and $17.1 \%$ smaller than that of Random Selection. We compare the fairness of the algorithms in Figure \ref{fig:p2}(ii). For this, we report the selection ratio of each Pareto front bin, which is defined for bin $i$ as the number of times a context-arm pair in bin $i$ is selected divided by the number of times a Pareto optimal arm is selected by round $T$. We observe that the selection ratio of all bins are almost the same for PCZ, while Contextual Zooming selects the context-arm pairs in the $1$st bin much more than the other bins.

\end{document}